\def\namedlabel#1#2{\begingroup
    #2%
    \def\@currentlabel{#2}%
    \phantomsection\label{#1}\endgroup
}
\newcommand{\JR}[1]{\todo[color=blue!20,size=\small]{{\bf JR:} #1}}
\newcommand{\RS}[1]{\todo[color=red!20,size=\small]{{\bf RS:} #1}}
\newcommand{\RW}[1]{\todo[color=violet!20,size=\small]{{\bf RW:} #1}}
\newcommand{\JR}[1]{}
\newcommand{\RS}[1]{}
\newcommand{\RW}[1]{}
\def \ifempty#1{\def\temp{#1} \ifx\temp\empty }
\newcommand{\N}{\mathbb{N}}
\newcommand{\R}{\mathbb{R}}
\newcommand{\1}{\mathds{1}}
\newcommand{\measSpace}{\Omega}
\newcommand{\measSet}{\mathcal M(\measSpace)}
\newcommand{\pmeasSet}{\mathcal M^+(\measSpace)}
\newcommand{\probaSet}{\mathcal M_p(\measSpace)}
\newcommand{\pp}[1]{~#1\text{-a.e.}}
\newcommand{\AC}{\text{AC}}
\newcommand{\PRD}{\text{PRD}}
\newcommand{\LD}{\text{LD}}
\newcommand{\dLD}{F_{LD}}
\newcommand{\risk}{R}
\newcommand{\tiA}{TI}
\newcommand{\tiB}{TI'}
\def\T{Q}
\def\SectionSymbol{§}
\def\S{P}
\def\dom{\mathrm{dom}}
\def\sign{\mathrm{sign}}
\def\range{\mathrm{range}}
\newcommand{\LRS}{A_\lambda}
\DeclareMathOperator*{\argmin}{arg\,min}
\DeclareMathOperator*{\argmax}{arg\,max}
\DeclareMathOperator*{\esssup}{ess\,sup}
\newcommand{\revision}[1]{{\color{red} #1}}
\newcommand{\revisioncomment}[1]{{\color{blue} #1}}
\newcommand{\revision}[1]{{ #1}}
\newcommand{\revisioncomment}[1]{}
\begin{document}

\title{On the Theoretical Equivalence of Several Trade-Off Curves \\Assessing Statistical Proximity}

\author{\name Rodrigue Siry \email rodrigue.siry@safrangroup.com\\
       \addr Safran Electronics and Defense
       \AND
       \name Ryan Webster \email ryan.webster@unicaen.fr\\
       \addr NORMANDIE UNIV, UNICAEN, ENSICAEN, CNRS, GREYC, 14000 CAEN, FRANCE
       \AND
       \name Loic Simon \email loic.simon@ensicaen.fr\\
       \addr NORMANDIE UNIV, UNICAEN, ENSICAEN, CNRS, GREYC, 14000 CAEN, FRANCE
       \AND
       \name Julien Rabin \email julien.rabin@unicaen.fr\\
       \addr NORMANDIE UNIV, UNICAEN, ENSICAEN, CNRS, GREYC, 14000 CAEN, FRANCE
       }

\editor{Kevin Murphy and Bernhard Sch{\"o}lkopf}

\maketitle

\begin{abstract}
The recent advent of powerful generative models has triggered the renewed development of quantitative measures to assess the proximity of two probability distributions.
As the scalar Frechet inception distance remains popular, several methods have explored computing entire curves, which reveal the trade-off between the fidelity and variability of the first distribution with respect to the second one.
Several of such variants have been proposed independently and while intuitively similar, their relationship has not yet been made explicit. In an effort to make the emerging picture of generative evaluation more clear, we propose a unification of four curves known respectively as: the precision-recall (PR) curve, the Lorenz curve, the receiver operating characteristic (ROC) curve and a special case of Rényi divergence frontiers. 
In addition, we discuss possible links between PR / Lorenz curves with the derivation of domain adaptation bounds.
\end{abstract}

\begin{keywords}
trade-off curve, distributional closeness, generative modeling, domain adaptation
\end{keywords}

\begin{figure}[ht]
    \centering
    \tikzset{every picture/.style={line width=0.75pt}} 

\begin{tikzpicture}[x=0.75pt,y=0.75pt,yscale=-1,xscale=1]

\draw  [fill={rgb, 255:red, 255; green, 255; blue, 255 }  ,fill opacity=1 ][general shadow={fill={rgb, 255:red, 74; green, 74; blue, 74 }  ,shadow xshift=3.75pt,shadow yshift=-1.5pt, opacity=1 }] (93.54,15) -- (272.85,15) -- (272.85,93.38) .. controls (160.78,93.38) and (183.19,121.64) .. (93.54,103.35) -- cycle ;
\draw  [fill={rgb, 255:red, 212; green, 212; blue, 212 }  ,fill opacity=1 ] (93.54,15) -- (272.85,15) -- (272.85,51) -- (93.54,51) -- cycle ;

\draw  [fill={rgb, 255:red, 255; green, 255; blue, 255 }  ,fill opacity=1 ][general shadow={fill={rgb, 255:red, 74; green, 74; blue, 74 }  ,shadow xshift=3.75pt,shadow yshift=-1.5pt, opacity=1 }] (372.69,16) -- (552,16) -- (552,94.38) .. controls (439.93,94.38) and (462.34,122.64) .. (372.69,104.35) -- cycle ;
\draw  [fill={rgb, 255:red, 212; green, 212; blue, 212 }  ,fill opacity=1 ] (372.69,16) -- (552,16) -- (552,52) -- (372.69,52) -- cycle ;
\draw  [fill={rgb, 255:red, 255; green, 255; blue, 255 }  ,fill opacity=1 ][general shadow={fill={rgb, 255:red, 74; green, 74; blue, 74 }  ,shadow xshift=3.75pt,shadow yshift=-1.5pt, opacity=1 }] (93.54,175) -- (272.85,175) -- (272.85,253.38) .. controls (160.78,253.38) and (183.19,281.64) .. (93.54,263.35) -- cycle ;
\draw  [fill={rgb, 255:red, 212; green, 212; blue, 212 }  ,fill opacity=1 ] (93.54,175) -- (272.85,175) -- (272.85,211) -- (93.54,211) -- cycle ;
\draw  [fill={rgb, 255:red, 255; green, 255; blue, 255 }  ,fill opacity=1 ][general shadow={fill={rgb, 255:red, 74; green, 74; blue, 74 }  ,shadow xshift=3.75pt,shadow yshift=-1.5pt, opacity=1 }] (94.5,336) -- (273.81,336) -- (273.81,414.38) .. controls (161.74,414.38) and (184.16,442.64) .. (94.5,424.35) -- cycle ;
\draw  [fill={rgb, 255:red, 212; green, 212; blue, 212 }  ,fill opacity=1 ] (94.5,336) -- (273.81,336) -- (273.81,372) -- (94.5,372) -- cycle ;
\draw [line width=1.5]    (173.99,111.5) -- (173.99,169.5) ;
\draw [shift={(173.99,173.5)}, rotate = 270] [fill={rgb, 255:red, 0; green, 0; blue, 0 }  ][line width=0.08]  [draw opacity=0] (11.61,-5.58) -- (0,0) -- (11.61,5.58) -- cycle    ;
\draw [shift={(173.99,107.5)}, rotate = 90] [fill={rgb, 255:red, 0; green, 0; blue, 0 }  ][line width=0.08]  [draw opacity=0] (11.61,-5.58) -- (0,0) -- (11.61,5.58) -- cycle    ;
\draw  [fill={rgb, 255:red, 255; green, 255; blue, 255 }  ,fill opacity=1 ][general shadow={fill={rgb, 255:red, 74; green, 74; blue, 74 }  ,shadow xshift=3.75pt,shadow yshift=-1.5pt, opacity=1 }] (372.69,176) -- (552,176) -- (552,254.38) .. controls (439.93,254.38) and (462.34,282.64) .. (372.69,264.35) -- cycle ;
\draw  [fill={rgb, 255:red, 212; green, 212; blue, 212 }  ,fill opacity=1 ] (372.69,176) -- (552,176) -- (552,212) -- (372.69,212) -- cycle ;
\draw [line width=1.5]    (173.99,271.5) -- (173.99,333.5) ;
\draw [shift={(173.99,267.5)}, rotate = 90] [fill={rgb, 255:red, 0; green, 0; blue, 0 }  ][line width=0.08]  [draw opacity=0] (11.61,-5.58) -- (0,0) -- (11.61,5.58) -- cycle    ;
\draw  [fill={rgb, 255:red, 255; green, 255; blue, 255 }  ,fill opacity=1 ][general shadow={fill={rgb, 255:red, 74; green, 74; blue, 74 }  ,shadow xshift=3.75pt,shadow yshift=-1.5pt, opacity=1 }] (374.5,336) -- (553.81,336) -- (553.81,414.38) .. controls (441.74,414.38) and (464.16,442.64) .. (374.5,424.35) -- cycle ;
\draw  [fill={rgb, 255:red, 212; green, 212; blue, 212 }  ,fill opacity=1 ] (374.5,336) -- (553.81,336) -- (553.81,372) -- (374.5,372) -- cycle ;
\draw [line width=1.5]    (494.99,264) -- (494.99,334.5) ;
\draw [shift={(494.99,260)}, rotate = 90] [fill={rgb, 255:red, 0; green, 0; blue, 0 }  ][line width=0.08]  [draw opacity=0] (11.61,-5.58) -- (0,0) -- (11.61,5.58) -- cycle    ;
\draw [line width=1.5]    (283,63.96) -- (365.5,63.04) ;
\draw [shift={(369.5,63)}, rotate = 179.37] [fill={rgb, 255:red, 0; green, 0; blue, 0 }  ][line width=0.08]  [draw opacity=0] (11.61,-5.58) -- (0,0) -- (11.61,5.58) -- cycle    ;
\draw [shift={(279,64)}, rotate = 359.37] [fill={rgb, 255:red, 0; green, 0; blue, 0 }  ][line width=0.08]  [draw opacity=0] (11.61,-5.58) -- (0,0) -- (11.61,5.58) -- cycle    ;
\draw [line width=1.5]    (283,223.96) -- (365.5,223.04) ;
\draw [shift={(369.5,223)}, rotate = 179.37] [fill={rgb, 255:red, 0; green, 0; blue, 0 }  ][line width=0.08]  [draw opacity=0] (11.61,-5.58) -- (0,0) -- (11.61,5.58) -- cycle    ;
\draw [shift={(279,224)}, rotate = 359.37] [fill={rgb, 255:red, 0; green, 0; blue, 0 }  ][line width=0.08]  [draw opacity=0] (11.61,-5.58) -- (0,0) -- (11.61,5.58) -- cycle    ;
\draw [line width=1.5]  [dash pattern={on 1.69pt off 2.76pt}]  (283,383.96) -- (365.5,383.04) ;
\draw [shift={(369.5,383)}, rotate = 179.37] [fill={rgb, 255:red, 0; green, 0; blue, 0 }  ][line width=0.08]  [draw opacity=0] (11.61,-5.58) -- (0,0) -- (11.61,5.58) -- cycle    ;
\draw [shift={(279,384)}, rotate = 359.37] [fill={rgb, 255:red, 0; green, 0; blue, 0 }  ][line width=0.08]  [draw opacity=0] (11.61,-5.58) -- (0,0) -- (11.61,5.58) -- cycle    ;
\draw [line width=1.5]    (424.5,272) -- (424.02,331.5) ;
\draw [shift={(423.99,335.5)}, rotate = 270.46] [fill={rgb, 255:red, 0; green, 0; blue, 0 }  ][line width=0.08]  [draw opacity=0] (11.61,-5.58) -- (0,0) -- (11.61,5.58) -- cycle    ;

\draw (127.22,24) node [anchor=north west][inner sep=0.75pt]  [font=\large] [align=left] {Lorenz curves};
\draw (115.42,67) node [anchor=north west][inner sep=0.75pt]   [align=left] {\citetalias{van2010renyi}};
\draw (392.96,68) node [anchor=north west][inner sep=0.75pt]   [align=left] {\cite{lin2018pacgan}};
\draw (404.69,25) node [anchor=north west][inner sep=0.75pt]  [font=\large] [align=left] {ROC curves};
\draw (115.64,217) node [anchor=north west][inner sep=0.75pt]   [align=left] {\cite{sajjadi2018} \\ \cite{simon2019revisiting}};
\draw (123.86,184) node [anchor=north west][inner sep=0.75pt]  [font=\large] [align=left] {PR curves\\};
\draw (114.32,388) node [anchor=north west][inner sep=0.75pt]   [align=left] {\cite{pmlr-v108-djolonga20a}};
\draw (108.47,345) node [anchor=north west][inner sep=0.75pt]  [font=\large] [align=left] {Divergence frontiers};
\draw (292.18,18) node [anchor=north west][inner sep=0.75pt]   [align=left] {symmetry};
\draw (111.89,292) node [anchor=north west][inner sep=0.75pt]   [align=left] {$\displaystyle a=\infty $};
\draw (103.97,130) node [anchor=north west][inner sep=0.75pt]   [align=left] {Legendre \ \ transform};
\draw (392.21,218) node [anchor=north west][inner sep=0.75pt]   [align=left] {\cite{degroot1962uncertainty}\\\cite{liese2006divergences}};
\draw (394.19,185) node [anchor=north west][inner sep=0.75pt]  [font=\large] [align=left] {De Groot's divergence};
\draw (288.18,176) node [anchor=north west][inner sep=0.75pt]   [align=left] {$\displaystyle \pi =\frac{\lambda }{1+\lambda }$};
\draw (394.32,378) node [anchor=north west][inner sep=0.75pt]   [align=left] {\cite{csiszar1964information-theoretic}\\\cite{ali1966general}};
\draw (388.47,345) node [anchor=north west][inner sep=0.75pt]  [font=\large] [align=left] {$\displaystyle \phi -$Divergence};
\draw (319.89,284) node [anchor=north west][inner sep=0.75pt]   [align=left] {\begin{minipage}[lt]{67.94pt}\setlength\topsep{0pt}
\begin{flushright}
integral\\representation
\end{flushright}

\end{minipage}};
\draw (501.89,292) node [anchor=north west][inner sep=0.75pt]   [align=left] {$\displaystyle D_{\phi _{\pi }}$};

\end{tikzpicture}
    \caption{Overview of different works and their relationship. \revision{Note that PR and ROC curves refer here to their definition with respect to distributions not classifiers.}}
    \label{fig:related-work}
\end{figure}

\section{Introduction}\label{sec:introduction}

\revision{Assessing the proximity of two probability distributions is a long standing concern in statistics. It has gained a new impetus with the advent of deep learning techniques to generate random samples from complex data distributions such as natural images. }
Generative models, particularly Generative Adversarial Networks (GANs), can now synthesize images with unprecedented realism.
Indeed, the quality of generation has improved significantly\footnote{\footnotesize\url{https://thispersondoesnotexist.com/}} to the point where for certain datasets, human observers have difficulty discerning real and fake \citep{brock2018large,karras2019style}.
As these networks see real applications, evaluation of generative networks has become essential and remains challenging.
Additionally, such performance raises suspicion about memorizing or overfitting some training images.
The amount of training data makes visual comparative evaluation not reliable enough.
For instance, the privacy and security of generative models has become paramount, including the largest ever Kaggle competition\footnote{\footnotesize\url{https://www.kaggle.com/c/deepfake-detection-challenge}} to address deep fakes, or several new works addressing how generative models leak training data \citep{wang2019cnn,webster2019GenOverfit}. 
Even properly determining sample quality remains challenging \citep{borji2019pros}. The Fr\'echet Inception Distance (FID) \citep{heusel2017gans} was shown to correlate decently with human evaluation and remains the most popular evaluation metric, but as a scalar metric is limited when assessing model failure \citep{sajjadi2018}. A variety of other approaches attempt to give an empirical estimation of sample quality, for instance in \citep{im2018quantitatively}, the original GAN training divergence was re-used for evaluation. \citet{sajjadi2018} proposed computing an entire precision-recall curve for the generated distribution. Unlike the scalar FID, this curve distinguishes what we will refer to as the {\em fidelity} and {\em variability} of the model \citep{naeem2020reliable}. Fidelity evaluates whether the generated distribution produces data that are faithful to the original distribution whereas variability reflects the fact that it covers the entire distribution with the correct importance. For instance, a generator of facial images with poor diversity may only generate one gender, whereas a generator with poor fidelity contains generation artefacts.

\par

Even if the question of estimating the similarity of two distributions has already been intensely studied in the past, it has drawn a renewed interest in recent years. Many probability distribution metrics can be used such as divergences \citep{rubenstein2019practical} or integral probability metrics \citep{muller1997integral,sriperumbudur2009integral} to name a few. In this work, we will take special focus on the recent work of \citet{sajjadi2018} computing a Precision-Recall (PR) curve between two distributions\footnote{\revision{Note that as we shall detail later, PR curves between distributions are different from the classical notion of PR curves in classification.}}.  
Several works explicitly build upon their definition and propose some extensions.
For instance, \citet{simon2019revisiting} generalizes the PR curve to arbitrary probabilities (while the work of \citet{sajjadi2018} was restricted to a discrete settings).
More practical works aim at improving empirical evaluation of fidelity and variability \citep{kynkaanniemi2019improved,naeem2020reliable}.
\citet{djolonga2019evaluating} proposed the (R\'enyi) {\em divergence frontiers}; this alternate curve coincides with the original PR curve for discrete distributions when the R\'enyi exponent is infinite.
 Independently, a handful of alternative curves were defined to compare two distributions.
For instance, {\em ROC curves} were proposed by \citet{lin2018pacgan, lin2017arxiv} and the {\em Lorenz curves} by \citet{harremoes2004new,van2010renyi}.

In this work, we demonstrate that, despite their apparently independent definitions, these alternate notions are actually tightly linked with the PR curves.
Our main contribution 
is the theoretical unification between the involved curves, which is summarized in the diagram of Fig.~\ref{fig:related-work}.
After briefly introducing standard hypotheses and notations, we recall definitions and properties of the aforementioned curves in Section~\ref{sec:related_works}.
Relations between them are scrutinized in Section~\ref{sec:link}.
In particular, we  consider the link found by \citet{djolonga2019evaluating} between PR curves and divergence frontiers (\SectionSymbol~\ref{sec:DivFrontiers-PR}) for infinite R\'enyi exponent $a$, hence extending it from discrete distributions to general ones. 
More importantly, we show that Lorenz curves and PR curves are related through convex duality (\SectionSymbol~\ref{sec:equivalence}). 
As a side contribution, we end these notes in Section~\ref{sec:link_DA} by highlighting links existing between the aforementioned curves and performance bounds used in the theory of domain adaptation. 
\revision{Last, in Section~\ref{sec:phi-div}, we explore several links with $\phi$-divergences. In particular, thanks to integral representations of $\phi$-divergences, we can show a reversed link between PR curves and divergence frontiers. Besides, starting from the variational form of $\phi$-divergence we extend the notion of Lorenz curves and use them to establish a new generalization bound for domain adaptation.}

\section{Background on trade-off curves}
\label{sec:related_works}
In this section, we review several curves proposed in the literature to assess the similarity between two distributions $P$ and $Q$. 
We simply recap the principal definitions and useful results. Some notions are subject to minor adaptations in order to simplify the exposition of the links between the considered curves. Anytime such a revision is adopted, it shall be explicitly mentioned.

Let us start by recalling some standard notations, definitions, and results from measure theory.
From now on, $(\measSpace,\mathcal A)$ represents a common measurable space, and we will denote $\measSet$ the set of sigma-finite signed measures, $\pmeasSet$ the set of sigma-finite positive measures and $\probaSet$ the set of probability distributions over that measurable space.  The extended half real-line  is denoted by $\overline{\R^+} = \R^+ \cup \{\infty\}$.
\begin{definition}
Let $\mu,\nu$ two signed measures. We denote by 
    $\mathrm{supp}(\mu)$ the support\footnote{Although a precise definition of the support requires a topology, we gloss over this issue because the support will not play a central role in the technical derivations.} of $\mu$,
    $|\mu|$ the total variation measure of  $\mu$,
    $\frac{d\mu}{d\nu}$ the Radon-Nikodym derivative of $\mu$ w.r.t. $\nu$ and
    $\mu\wedge\nu = \min(\mu,\nu) := \frac 12 (\mu+\nu -|\mu-\nu|)$
    (a.k.a the measure of largest common mass between $\mu$ and $\nu$ \citep{piccoli2019wasserstein}).
    Besides, as usual, $\mu\ll\nu$ means that $\mu$  is absolutely continuous w.r.t. $\nu$.
\end{definition}

\subsection{Precision-recall curves}
The PR curves were first proposed by \cite{sajjadi2018} for discrete distributions and then extended to the general case by \cite{simon2019revisiting}. We follow the definition of the latter up to a minor fix\footnote{There is an issue with their original definition where $(1,0)$ and $(0,1)$ are always in $\PRD(P,Q)$, while they should not when part of the mass of $P$ is absent from $Q$ and vice versa. Our fix consists in considering only the distributions $\mu$ that are absolutely continuous w.r.t $P$ and $Q$.}.
\begin{definition}
Let $P,Q$ two distributions from $\probaSet$. We refer to the Precision-Recall set $\PRD(P,Q)$ as the set of Precision-Recall pairs $(\alpha,\beta)\in\R^+\times\R^+$ such that
\begin{equation}
 \exists\, \mu \in\AC(P,Q),  P\geq\beta \mu , Q \geq \alpha \mu
 \;,
\end{equation}
where $\AC(P,Q):=\{\mu\in\probaSet / \mu\ll P \text{ and } \mu\ll Q\}$.
\end{definition}
The \emph{precision} value $\alpha$ is related to the proportion of the generated distribution $Q$ that matches the true data $P$, while conversely the \emph{recall} value $\beta$ is the amount of the distribution $P$ that can be reconstructed from $Q$.
Because of the lack of natural order on $[0,1]\times[0,1]$, \cite{simon2019revisiting} has proposed to focus on the Pareto front of $\PRD(P,Q)$ defined as follows. %
\begin{definition}
\label{def:prd-curve}
The precision recall-curve $\partial \PRD(P,Q)$ is the set of $(\alpha,\beta)\in \PRD(P,Q)$ such that
    \begin{equation*}
       \forall (\alpha',\beta')\in \PRD(P,Q), \alpha\geq\alpha'\text{ or }\beta\geq\beta' .
    \end{equation*}
    \revision{
    Note that this curve is clearly the Pareto front of the set :
    \begin{equation*}
        \{ (\kappa^*(Q|\mu), \kappa^*(P|\mu)) / \mu\in \AC(P,Q)\}
    \end{equation*}
    where, following \citet{scott2013classification}, we define $\kappa^*(P|\mu):=\max\{\alpha\in [0,1] / \exists \nu\in\probaSet, P = \alpha \mu +(1-\alpha)\nu\} = \inf_{\substack{A\in\mathcal A\\ \mu(A)>0}} \frac{P(A)}{\mu(A)}$ (and similarly for $Q$).
    }
\end{definition}

In fact, this frontier is a curve for which \cite{sajjadi2018} have exposed a parameterization, later generalized by \cite{simon2019revisiting}. We recall their result now.
\begin{theorem}
\label{thm:dprd}
Let $P,Q$ two distributions from $\probaSet$
and $(\alpha,\beta)$ non negative. Then, denoting
    \begin{equation}
    \forall \lambda \in \overline{\R^+},
    \left\{
    \begin{aligned}
        \alpha_\lambda :=& \left({(\lambda P)\wedge Q} \right) (\measSpace)\\
        \beta_\lambda :=& \left({P\wedge\tfrac 1\lambda Q}\right)(\measSpace)
    \end{aligned}
    \right.
    \end{equation}
\begin{enumerate}
    \item $(\alpha,\beta)\in \PRD(P,Q)$ iff $\alpha\leq \alpha_\lambda$ and $\beta \leq \beta_\lambda$ where $\lambda:=\frac \alpha\beta\in\overline{\R^+}$.
    \item  As a result,  the PR curve can be parameterized as:
\begin{equation}
    \partial \PRD(P,Q) = \lbrace (\alpha_\lambda, \beta_\lambda) / \lambda \in \overline{\R^{+}}\rbrace
    \, .
\end{equation}

\end{enumerate}
\end{theorem}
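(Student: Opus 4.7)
The plan is to establish part 1 by reformulating the defining condition of $\PRD(P,Q)$ as a question on the maximal total mass of a positive measure dominated by both $P/\beta$ and $Q/\alpha$, and then to derive part 2 as a short corollary using monotonicity of $\lambda\mapsto\alpha_\lambda$ and $\lambda\mapsto\beta_\lambda$.

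For part 1, I would first handle the generic case $\alpha,\beta>0$, rewriting the two constraints $P\geq\beta\mu$ and $Q\geq\alpha\mu$ as the single domination condition $\mu\leq\nu$ with $\nu:=(P/\beta)\wedge(Q/\alpha)$. Since $\nu$ is bounded above by both $P/\beta$ and $Q/\alpha$, it automatically lies in $\AC(P,Q)$ as a positive measure, so existence of a probability $\mu\in\AC(P,Q)$ meeting the constraints reduces to $\nu(\measSpace)\geq 1$, with $\mu=\nu/\nu(\measSpace)$ as an explicit witness. Computing $\nu(\measSpace)=\frac{1}{\alpha}(\lambda P\wedge Q)(\measSpace)=\alpha_\lambda/\alpha$ with $\lambda=\alpha/\beta$ turns this into $\alpha\leq\alpha_\lambda$; the identity $\beta_\lambda=\alpha_\lambda/\lambda$, obtained by pulling $1/\lambda$ out of the infimum, then gives the symmetric form $\beta\leq\beta_\lambda$.

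For part 2, part 1 immediately yields $(\alpha_\lambda,\beta_\lambda)\in\PRD(P,Q)$ for every $\lambda$, and for any $(\alpha,\beta)\in\PRD(P,Q)$ with $\lambda:=\alpha/\beta$ it gives $(\alpha,\beta)\leq(\alpha_\lambda,\beta_\lambda)$ componentwise, so the parametrized family covers the Pareto front. To see that each $(\alpha_\lambda,\beta_\lambda)$ is itself Pareto optimal, I would invoke the monotonicity $\lambda\mapsto\alpha_\lambda$ non-decreasing and $\lambda\mapsto\beta_\lambda$ non-increasing (direct from the fact that $\lambda P\wedge Q$ grows and $P\wedge Q/\lambda$ shrinks with $\lambda$), which prevents any two distinct points of the family from being comparable in the product order and therefore places the whole family on the Pareto front.

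The main obstacle will be a careful treatment of the boundary cases $\lambda\in\{0,\infty\}$, where one of $\alpha,\beta$ vanishes and the ratio $\alpha/\beta$ is not a real number. There the corresponding constraint becomes trivial and the candidate measure $\nu$ degenerates (e.g.\ at $\lambda=\infty$ it is $Q$ restricted to $\mathrm{supp}(P)$); I would verify by a direct limiting argument that the formulas give $\alpha_0=0$, $\alpha_\infty=Q(\mathrm{supp}(P))$, $\beta_0=P(\mathrm{supp}(Q))$, $\beta_\infty=0$, that these correctly describe the degenerate feasibility conditions, and that the monotonicity argument extends continuously at both endpoints.
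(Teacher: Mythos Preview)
The paper does not actually prove Theorem~\ref{thm:dprd}: it is recalled from \cite{sajjadi2018,simon2019revisiting} without argument, so there is no in-paper proof to compare against. That said, your approach is the natural one and is largely sound.

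Part~1 is fine: reducing feasibility to $\nu(\measSpace)\ge 1$ with $\nu=(P/\beta)\wedge(Q/\alpha)$, together with the scaling identity $\beta_\lambda=\alpha_\lambda/\lambda$, is exactly the right mechanism. For part~2 there are two small issues. First, your claim that monotonicity ``prevents any two distinct points of the family from being comparable'' is false as stated: for $P=Q$ one has $(\alpha_{1/2},\beta_{1/2})=(\tfrac12,1)$ and $(\alpha_1,\beta_1)=(1,1)$, which are comparable. Monotonicity still shows each $(\alpha_\lambda,\beta_\lambda)$ lies on the weak Pareto front $\partial\PRD$ (if some $(\alpha',\beta')\in\PRD$ strictly dominated it, then with $\lambda'=\alpha'/\beta'$ the inequality $\alpha_\lambda<\alpha_{\lambda'}$ forces $\lambda<\lambda'$ while $\beta_\lambda<\beta_{\lambda'}$ forces $\lambda>\lambda'$), but the antichain phrasing is misleading. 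Second, and more substantively, you have not argued the reverse inclusion $\partial\PRD\subseteq\{(\alpha_\lambda,\beta_\lambda):\lambda\in\overline{\R^+}\}$; you only show every point of $\PRD$ is \emph{dominated} by some $(\alpha_\lambda,\beta_\lambda)$. The missing observation is precisely the ratio identity you already used: since $\alpha_\lambda/\beta_\lambda=\lambda=\alpha/\beta$, the domination $(\alpha,\beta)\le(\alpha_\lambda,\beta_\lambda)$ is either an equality or strict in both coordinates simultaneously, and the latter is excluded when $(\alpha,\beta)\in\partial\PRD$. With that one line part~2 closes (modulo the boundary cases $\lambda\in\{0,\infty\}$ you already flagged, where the ray argument degenerates and a direct check is indeed needed).
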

In the previous theorem, and in agreement with the standard convention in measure theory $0\times\infty=0$ so that $\alpha_\infty=Q(\mathrm{supp}(P))$ and $\beta_0=P(\mathrm{supp}(Q))$.

\revision{
\begin{remark}
\label{rmk:groot}
The previous parameterization reveals that the precision-recall curve is intrinsically related to the De Groot statistical information \citep{degroot1962uncertainty} which is defined  as $\Delta B_\pi(P,Q):= B_\pi(P,P)-B_\pi(P,Q)$ through the following divergence (that we will refer in short as the De Groot divergence):
\begin{equation}
    \label{eq:degroot-bpi}
   B_\pi(P,Q):=[\pi P \wedge (1-\pi) Q](\measSpace) 
\end{equation}
where $\pi\in [0,1]$ is an arbitrary prior probability. In simple words, the link with the precision recall curve is merely a change of parameterization :
$\pi =\frac{\lambda}{1+\lambda}$. 
This relationship will play a crucial role in one of the links made with $\phi$-divergences in section~\ref{sec:phi-div} (see the integral representation arrow in Figure~\ref{fig:related-work}).
\end{remark}
}

Another useful result of \cite{simon2019revisiting} linking the frontier to likelihood ratio test is summarized now:
\begin{theorem}
\label{thm:dprd2}
Let $P,Q$ two distributions from $\probaSet$. Then 
    \begin{equation}
    \label{eq:dprd2}
    \forall \lambda \in \overline{\R^+},
    \left\{
    \begin{aligned}
        \alpha_\lambda =& \lambda \left(1-P(\LRS)\right) + Q(\LRS)\\
        \beta_\lambda =& 1-P(\LRS) + \tfrac{Q(\LRS)}\lambda\\
    \end{aligned}
    \right.,
    \end{equation}
    where the likelihood ratio sets are defined as
    \begin{equation}
        \label{eq:lrs}
        \LRS:=\left\lbrace \tfrac{dQ}{d(P+Q)} \leq \lambda \tfrac{dP}{d(P+Q)} \right\rbrace
        \, .
    \end{equation}
\end{theorem}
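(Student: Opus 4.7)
The plan is to reduce both $\alpha_\lambda$ and $\beta_\lambda$ to integrals of pointwise minima of densities against a common dominating measure, then to recognize the likelihood ratio set $A_\lambda$ as precisely the region on which the minimum is attained by one term rather than the other.

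First I would introduce the dominating finite measure $\nu := P + Q$ and the associated densities $p := \frac{dP}{d\nu}$ and $q := \frac{dQ}{d\nu}$, which are both in $[0,1]$ $\nu$-almost everywhere. With this notation $A_\lambda = \{q \leq \lambda p\}$. Using the identity $\min(a,b) = \tfrac12(a+b - |a-b|)$ in the definition of $\mu\wedge\nu$, the measure $(\lambda P) \wedge Q$ admits density $\min(\lambda p, q)$ w.r.t.\ $\nu$, and similarly $P \wedge \tfrac1\lambda Q$ has density $\min(p, q/\lambda)$. Therefore
\begin{equation*}
\alpha_\lambda = \int_\measSpace \min(\lambda p, q)\,d\nu,
\qquad
\beta_\lambda = \int_\measSpace \min(p, q/\lambda)\,d\nu.
\end{equation*}

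Next I would split each integral along $A_\lambda$ and its complement. On $A_\lambda$ one has $q \leq \lambda p$, hence $\min(\lambda p,q) = q$ and $\min(p,q/\lambda) = q/\lambda$; on $A_\lambda^c$ the reverse strict inequality holds and the minima equal $\lambda p$ and $p$ respectively. This immediately yields
\begin{equation*}
\alpha_\lambda = Q(A_\lambda) + \lambda\, P(A_\lambda^c),
\qquad
\beta_\lambda = \tfrac{1}{\lambda}Q(A_\lambda) + P(A_\lambda^c),
\end{equation*}
and substituting $P(A_\lambda^c) = 1 - P(A_\lambda)$ gives exactly \eqref{eq:dprd2}.

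The only nonroutine point is handling the extreme values $\lambda \in \{0,\infty\}$, which I would dispatch at the end using the convention $0\times\infty = 0$. For $\lambda = 0$, the set $A_0 = \{q = 0\}$ satisfies $Q(A_0) = 0$ and $1 - P(A_0) = P(\mathrm{supp}(Q))$, recovering $\alpha_0 = 0$ and $\beta_0 = P(\mathrm{supp}(Q))$ as noted after Theorem~\ref{thm:dprd}; for $\lambda = \infty$, $A_\infty = \measSpace$ (up to $\nu$-null sets where $p=0$, which carry no $P$-mass), so $P(A_\infty)=1$ and $Q(A_\infty) = Q(\mathrm{supp}(P))$, recovering $\alpha_\infty = Q(\mathrm{supp}(P))$ and $\beta_\infty = 0$. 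Apart from these boundary checks, the argument is a one-line integral splitting, so the main obstacle is really only bookkeeping of the $0\cdot\infty$ conventions rather than any substantive difficulty.
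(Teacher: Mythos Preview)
Your argument is correct: passing to the dominating measure $\nu=P+Q$, writing $\alpha_\lambda$ and $\beta_\lambda$ as integrals of $\min(\lambda p,q)$ and $\min(p,q/\lambda)$, and splitting along $A_\lambda=\{q\le\lambda p\}$ is exactly the natural route and yields the stated identities immediately. One small wording issue in your $\lambda=\infty$ discussion: the set $\{p=0\}$ is not $\nu$-null in general (it may carry $Q$-mass), so $A_\infty$ is not ``$\Omega$ up to $\nu$-null sets''. What is true is that $A_\infty=\{p>0\}\cup\{p=0,\,q=0\}$, hence $P(A_\infty)=1$ (because $P(\{p=0\})=0$) and $Q(A_\infty)=Q(\{p>0\})=Q(\mathrm{supp}(P))$; your conclusions are right, only the justification phrase should be adjusted.

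As for comparison with the paper: this theorem is not proved in the present paper at all. It is recalled from \cite{simon2019revisiting} (see the sentence ``Another useful result of \cite{simon2019revisiting} \ldots is summarized now''), so there is no in-paper proof to match your proposal against. Your density-splitting argument is the standard one and is in the same spirit as the proof in the cited reference.
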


\subsection{Divergence frontiers}

Divergence frontiers were proposed very recently by \cite{djolonga2019evaluating} as a generalization of precision-recall curves. Such a notion builds upon the R\'enyi divergence between two distributions.

\begin{definition}
\label{def:renyi}
    Let $\mu,\nu\in\probaSet$ two distributions such that $\mu\ll\nu$ and $a\in\overline{\R^+}\setminus\{1\}$. The $a$-R\'enyi-divergence between $\mu$ and $\nu$ is defined as:
    \begin{equation}
        D_a(\mu\parallel \nu):=\log\left(\left\Vert \tfrac{d\mu}{d\nu} \right\Vert_{a-1, d\mu}\right)
    \end{equation}
    where the invoked norm is defined as 
    $$\Vert  f \Vert_{a-1, d\mu}:= \left(\int f^{a -1} d\mu \right)^{\tfrac 1{a-1}}$$ 
    when $a<+\infty$ and is the essential supremum norm for $a=\infty$.
    Besides when $a=1$, this definition is extended by continuity and leads to the KL-divergence.
\end{definition}

We adapt the definition of divergence frontiers from \cite{djolonga2019evaluating}.
\begin{definition}[Divergence frontiers] Let $P,Q$ two distributions and $a\in\overline{\R^+}$. Then the exclusive realizable divergence region is defined\footnote{In the original work, the distribution $\mu$ may range on a restricted set of distributions such as an exponential family. Besides, to avoid situations where the divergence is ill-defined, we imposed that $\mu$ is absolutely continuous w.r.t both $P$ and $Q$.} as the set:
\begin{equation}
    \mathcal R_a^\cap(P,Q) := \left\lbrace \left(D_a(\mu\parallel Q), D_a(\mu\parallel P)\right) / \mu\in\AC(P,Q)\right\rbrace
\end{equation}
And the exclusive divergence frontier is defined as the (weak) Pareto front of this region, that is the set $\partial \mathcal R_a^\cap$ of couples $(\pi,\rho)\in \mathcal R_a^\cap$ such that:
\begin{equation}
 \forall (\pi',\rho')\in \mathcal R_a^\cap, \pi \leq \pi'\text{ or } \rho\leq\rho'
    \, .
\end{equation}
In the event that $\mathcal R_a^\cap(P,Q)=\emptyset$, the frontier is by convention restricted to the point $(+\infty,+\infty)$.
\end{definition}

\subsection{Lorenz and ROC curves}\label{sec:Lorenz-ROC}

Lorenz curves were originally introduced by \cite{lorenz1905methods} to delineate income inequalities. In essence they highlight how much a single one dimensional distribution differs from a uniform distribution. This notion was then generalized to characterize the closeness of two arbitrary distributions by \cite{harremoes2004new,van2010renyi}. 

\begin{definition} Let $P,Q$ two distributions from $\probaSet$. 
One defines the Lorenz diagram between $P$ and $Q$ as 
\begin{equation}
\label{eq:ld}
   \LD(P,Q)=\left\lbrace \left(\int f dP, \int f dQ\right) / 0 \leq f \leq 1 \right\rbrace,
\end{equation}
where the function $f$ is required to be measurable.

Then the Lorenz curve between $P$ and $Q$ is defined as the lower envelop of the Lorenz diagram:
\begin{equation}
\label{eq:dld}
   \dLD^{P,Q}(t):= \inf_{\substack{0\leq f \leq 1 \\ \int f dP\geq t}} \int f dQ.
\end{equation}
In absence of ambiguity on the involved distributions, we shall denote it simply $F(t)$ rather than $\dLD^{P,Q}(t)$.
This curve is easily shown to be a monotonic and convex function. 
\end{definition}

If one considers in \eqref{eq:ld} only the range of indicator functions, then one recovers a subset of the Lorenz diagram, from which the Lorenz diagram can be extracted by merely taking the closed convex hull. This fact underpins the equivalence between Lorenz diagrams/curves and the seemingly different notions of Mode Collapse Region / ROC curves proposed by \cite{lin2018pacgan}. 
Indeed, they  show in \cite{lin2017arxiv}  (Remark~6) that their Mode Collapse Region (MCR) can be obtained as the convex hull of the set of points $(P(A), Q(A))$ where $A$ is any measurable set such that $Q(A)\geq P(A)$.
As such the MCR is the upper half of the Lorenz diagram when one cuts it along the main diagonal (i.e. the line segment between $(0,0)$ and $(1,1)$).
Then the authors proceed to define the ROC curve as the upper envelop of the MCR, which in turns is the symmetric transform of the lower envelop (i.e. the Lorenz curve) along the same diagonal.
For the sake of time precedence, we shall thereafter focus solely on the Lorenz curve.

Similarly, restricting \eqref{eq:dld} to indicator functions requires convexification (more precisely $\Gamma$-regularization) to recover the Lorenz curve. In fact, because of the Neyman-Pearson lemma, one can even restrict to the indicator functions of the likelihood ratio sets $\LRS$,
which in light of Thm~\ref{thm:dprd2} underlies a subtle link with precision-recall curves that we shall detail later on.

\section{Relationships between trade-off curves}\label{sec:link}
\begin{figure}[ht]
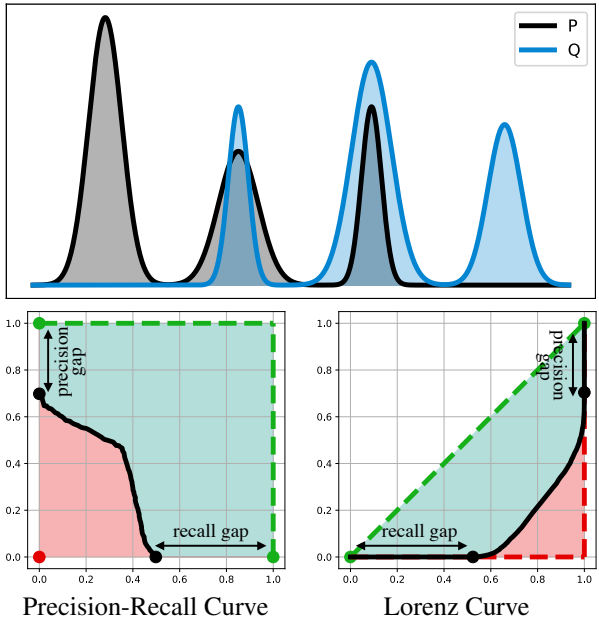

    \centering
    \begin{tabular}{c}
        \includegraphics[width=0.7\columnwidth]{img/cases/mixtures_complex_cropped.pdf}
    \end{tabular}
    
    \centering
    \begin{tabular}{ccc}
        \includegraphics[width=.3\columnwidth]{img/cases/PRC_complex_ann_cropped.pdf}
        & \hspace{0.1\columnwidth}
        & \includegraphics[width=.3\columnwidth]{img/cases/Lorenz_complex_ann_cropped.pdf}
        \\
        Precision-Recall Curve
        & 
        & Lorenz Curve
        \\
    \end{tabular}
    \caption{top: graphical representations of two mixtures of Gaussians $P$ and $Q$. bottom: the corresponding alternate similarity curves. For each alternative, the green dashed curve exposes the extreme case where $P=Q$ and the red spot/curve when $P\perp Q$.}
    \label{fig:complex}
\end{figure}

\begin{figure}[ht]
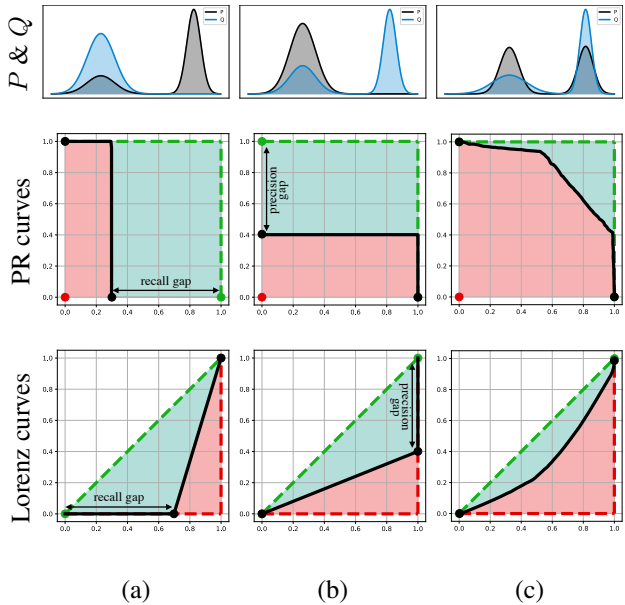

    \centering
    \begin{tabular}{c}
    \end{tabular}
    
    \centering
    \setlength{\tabcolsep}{2pt}
    \renewcommand{\arraystretch}{2.1}
    \begin{tabular}{cccc}
        \raisebox{5mm}{\rotatebox[origin=c]{90}{$P$ \& $Q$}}
        &\includegraphics[width=0.3\columnwidth]{img/cases/mixtures_dropping_cropped.pdf}
        &\includegraphics[width=0.3\columnwidth]{img/cases/mixtures_invention_cropped.pdf}
        &\includegraphics[width=0.3\columnwidth]{img/cases/mixtures_reweighting_cropped.pdf} 
        \\[2mm]
        \raisebox{10mm}{\rotatebox[origin=c]{90}{PR curves}}
        &\includegraphics[width=.25\columnwidth]{img/cases/PRC_dropping_ann_cropped.pdf}
        & \includegraphics[width=.25\columnwidth]{img/cases/PRC_invention_ann_cropped.pdf}
        & \includegraphics[width=.25\columnwidth]{img/cases/PRC_reweighting_ann_cropped.pdf}
        \\[2mm]
        \raisebox{10mm}{\rotatebox[origin=c]{90}{Lorenz curves}}
        &\includegraphics[width=.25\columnwidth]{img/cases/Lorenz_dropping_ann_cropped.pdf}
        & \includegraphics[width=.25\columnwidth]{img/cases/Lorenz_invention_ann_cropped.pdf}
        & \includegraphics[width=.25\columnwidth]{img/cases/Lorenz_reweighting_ann_cropped.pdf}
        \\
        & (a)
        & (b)
        & (c)
        \\
    \end{tabular}
    \caption{PR and Lorenz Curves in different scenarios: (a) pure mode dropping -- (b) pure mode invention -- (c) mode reweighting. Top row : the two distributions $P$ (in black) and $Q$ (in blue). Middle row: PR curves. Bottom row: Lorenz curves.}
    \label{fig:cases}
\end{figure}

Before going into further details about how the aforementioned curves relate to one another, 
let us state a few general facts about how they differ. 
One can first note that each curve is subject to specific ``regularity'' properties such as monotonicity, convexity and boundedness. 
For instance, contrary to the Lorenz curve, the PR curve does not enjoy any convexity property.
Similarly, both the PR curve and the Lorenz curve are bounded within the domain $[0,1]\times[0,1]$, while the divergence frontiers are not bounded in general. 
Last both the divergence frontiers and PR curves are decreasing while the Lorenz curve is increasing.

Despite these disparities, each curve serve a similar purpose and strong links exist between them. 
In the next paragraph, a few simple situations are first scrutinized to describe the behavior of PR and Lorenz curves (divergence frontiers are voluntarily excluded from this preamble).
Then, in \SectionSymbol~\ref{sec:DivFrontiers-PR} and \SectionSymbol~\ref{sec:equivalence}, we shall establish the exact links between those curves.

\subsection{Some intuition on the PR and Lorenz curves}\label{sec:Lorenz-PR}
 
In this preamble, we focus on the principal function of the curves under consideration: namely how they characterize the similarity between $P$ and $Q$.
To make our discussion more concrete, we consider an illustrative case in Fig.~\ref{fig:complex}, where $P$ and $Q$ are two Gaussian mixtures. 
For a given curve alternative, one may consider two extreme configurations. 
One one hand, the perfect match between $P$ and $Q$ \emph{i.e.} $P=Q$ is represented in dashed green. 
On the other hand, the complete discord between $P$ and $Q$, denoted by $P\perp Q$ corresponds to an empty overlap of their supports (or more formally to two mutually singular distributions) and is represented in red.
Then a particular instance of the considered curve will appear as an in-between case. The closer it stands to the green spot (and therefore the farther from the red spot), the more similar $P$ and $Q$.

To illustrate the benefit of a trade-off curve in comparison to scalar metrics, we depict a few examples in Fig.~\ref{fig:cases}. The three examples are obtained by adjusting the locations, widths and weights of the Gaussian mixture models. 
They corresponds to scenarios of idealistic modes of deviations between the two distributions $P$ and $Q$, namely (a) pure mode dropping, (b) pure mode invention and (c) pure mode reweighting. 
In (a) a gaussian component from $P$ is missing in $Q$, and this translates in both curves. In the PR curve, it becomes manifest through a drop of recall that is depicted by the horizontal gap separating the curve from the dash green curve. In the Lorenz curve, it is depicted by the horizontal gap between the point where the curve becomes positive and the origin $(0,0)$. 
In (b) an extra gaussian component is present in $Q$ and again this phenomenon is patent in the curves,  but this time it is depicted by vertical gaps.
In (c) $P$ and $Q$ present both two components centered at the same locations, but they have different mixing factors, and one of the two components is more spread in $Q$. 
In that scenario, maximal recall and maximal precision can be achieved but not at the same time. This trade-off is rendered in a  way specific to each curve. In both cases, the horizontal and vertical gaps indicative of mode dropping and mode invention are null, but the curve smoothly interpolate from full recall to full precision away from the green curve. 
The fact that one gaussian component is identical in $P$ and $Q$ translates clearly in the precision-recall curve: indeed full recall can be obtained for a non-null precision (approximately $0.4$ in this plot).
Figure~\ref{fig:complex} is depicting an aggregate of these 3 extreme scenarios and both trade-off curves reflect the three phenomenons.

\subsection{Precision-recall vs Divergence frontiers}\label{sec:DivFrontiers-PR}
In \cite{djolonga2019evaluating}, it is shown that in the case of discrete measures, the notion of divergence frontier matches precision-recall curve in the limit case where the R\'enyi exponent $a\to\infty$. We extend here this result to the case of general distributions. To do so, we will rely on the following technical lemma.
\begin{lemma}
\label{prop:sup-ratio}
Let $\mu\in\probaSet$ and $\nu\in\probaSet$ two distributions. 
\begin{enumerate}
    \item If $\mu\ll \nu$ then
$ \sup_{A\in\mathcal A} \tfrac{\mu(A)}{\nu(A)} = \esssup_{d\mu} \tfrac{d\mu}{d\nu};$
    \item otherwise,
$$ \sup_{A\in\mathcal A} \tfrac{\mu(A)}{\nu(A)} = \esssup_{d\mu} \tfrac{d\mu}{d(\mu+\nu)} / \tfrac{d\nu}{d(\mu+\nu)}.$$
\end{enumerate}
\end{lemma}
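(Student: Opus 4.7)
The plan is to treat the two cases separately. In the absolutely continuous case, the key step is to show that the essential supremum of $f := d\mu/d\nu$ is the same whether computed with respect to $\mu$ or $\nu$; everything else then reduces to integrating a pointwise bound against $f$. In the non-absolutely continuous case, both sides are $+\infty$ and the argument is essentially bookkeeping.

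For part 1, assume $\mu \ll \nu$ and let $M = \esssup_{d\mu} f$. I would first prove that $M = \esssup_{d\nu} f$. The inequality $\esssup_{d\mu} f \leq \esssup_{d\nu} f$ is immediate from $\mu \ll \nu$. For the converse, if $c > 0$ satisfies $\mu(\{f > c\}) = 0$, then
\begin{equation*}
0 = \mu(\{f > c\}) = \int_{\{f > c\}} f\, d\nu \geq c\, \nu(\{f > c\}),
\end{equation*}
which forces $\nu(\{f > c\}) = 0$; the corner case $c = 0$ is handled by observing that $\mu(\{f > 0\}) = \mu(\measSpace) = 1 > 0$, so $c = 0$ is never an admissible upper bound in either essential supremum. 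Once this is in hand, the upper bound $\sup_A \mu(A)/\nu(A) \leq M$ follows from $\mu(A) = \int_A f\, d\nu \leq M\,\nu(A)$, and the lower bound is obtained by plugging in $A = \{f > c\}$ for $c < M$: by the definition of $M$ both $\mu(A)$ and $\nu(A)$ are strictly positive, and $\mu(A) \geq c\,\nu(A)$, so $\sup \geq c$; letting $c \to M$ concludes.

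For part 2, one must check that both sides are $+\infty$. The failure of absolute continuity produces a set $A_0$ with $\nu(A_0) = 0$ and $\mu(A_0) > 0$, so under the standing convention $c/0 = \infty$ for $c > 0$, $\mu(A_0)/\nu(A_0) = +\infty$. For the right-hand side, set $\pi = \mu + \nu$, $g = d\mu/d\pi$, $h = d\nu/d\pi$; then up to a $\pi$-null set $A_0 \subseteq \{h = 0\}$, and the assumption $\mu(A_0) > 0$ forces $\mu(A_0 \cap \{g > 0\}) > 0$. On this set the ratio $g/h$ equals $+\infty$, so its $\mu$-essential supremum is $+\infty$ as well.

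The only non-routine ingredient is the identity $\esssup_{d\mu} f = \esssup_{d\nu} f$ in the absolutely continuous case, which requires the separate treatment of the threshold $c = 0$; after that, the result follows from standard monotonicity of the integral and the Lebesgue decomposition. A unified proof based throughout on $\pi = \mu + \nu$ and the pair $(g, h)$ is also possible, but the case split stays closer to the formulation of the lemma.
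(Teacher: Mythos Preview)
Your proof is correct and follows essentially the same structure as the paper's: part~2 is disposed of by showing both sides equal $+\infty$, and part~1 hinges on the identity $\esssup_{d\mu} f = \esssup_{d\nu} f$ (proved via $0 = \mu(\{f>c\}) = \int_{\{f>c\}} f\,d\nu$), after which the upper bound comes from integrating $f \leq M$ and the lower bound from examining the sets $\{f>c\}$. The only cosmetic difference is that for the lower bound the paper argues that $C := \sup_A \mu(A)/\nu(A)$ is a $\mu$-a.e.\ upper bound of $f$ by showing each $\{f \geq C + 1/n\}$ is $\nu$-null, whereas you approach $M$ from below with near-optimal sets; these are the two standard directions of the same argument.
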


\begin{proof}
For the sake of completeness, we provide a proof of this technical result in Appendix~\ref{app:sup-ratio}.
\end{proof}

\begin{theorem} 
\label{thm:pr-div}
Let $P,Q$ two distributions. Then,
$$\partial PRD(P,Q) = \left\lbrace (e^{-\pi}, e^{-\rho}) / (\pi,\rho)\in \partial \mathcal R_\infty^\cap(P,Q)\right\rbrace$$
\end{theorem}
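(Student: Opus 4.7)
The plan is to apply Lemma~\ref{prop:sup-ratio} to recast the infinite R\'enyi divergence as the multiplicative lower-bound quantity appearing in the very definition of $\PRD$, and then to transfer weak Pareto optimality through the coordinatewise decreasing map $\Phi(\pi,\rho):=(e^{-\pi},e^{-\rho})$. Concretely, for every $\mu\in\AC(P,Q)$ and $\nu\in\{P,Q\}$, Lemma~\ref{prop:sup-ratio} yields
\begin{equation*}
e^{D_\infty(\mu\Vert \nu)}=\esssup_{d\mu}\tfrac{d\mu}{d\nu}=\sup_{A\in\mathcal A}\tfrac{\mu(A)}{\nu(A)}=\bigl(\sup\{c\ge 0:\nu\ge c\mu\}\bigr)^{-1},
\end{equation*}
so that $\nu\ge c\mu$ is equivalent to $c\le e^{-D_\infty(\mu\Vert \nu)}$. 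This rewrites the defining condition of $\PRD$ as: $(\alpha,\beta)\in\PRD(P,Q)$ iff there exists $\mu\in\AC(P,Q)$ with $\alpha\le e^{-D_\infty(\mu\Vert Q)}$ and $\beta\le e^{-D_\infty(\mu\Vert P)}$; equivalently, $\PRD(P,Q)$ is precisely the coordinatewise lower set of $\Phi\bigl(\mathcal R_\infty^\cap(P,Q)\bigr)$.

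I would then match the two Pareto fronts. For the inclusion $\Phi(\partial\mathcal R_\infty^\cap)\subseteq\partial\PRD$, any $(\pi,\rho)\in\partial\mathcal R_\infty^\cap$ is realized by some $\mu$; the image $(\alpha,\beta):=\Phi(\pi,\rho)$ then lies in $\PRD$, and any $(\alpha',\beta')\in\PRD$ strictly dominating $(\alpha,\beta)$ would produce, via the correspondence, a $\mu'$ with $D_\infty(\mu'\Vert Q)<\pi$ and $D_\infty(\mu'\Vert P)<\rho$, contradicting the weak Pareto optimality of $(\pi,\rho)$ in $\mathcal R_\infty^\cap$. For the reverse inclusion, I would invoke Theorem~\ref{thm:dprd} to write $\partial\PRD=\{(\alpha_\lambda,\beta_\lambda)\}_{\lambda\in\overline{\R^+}}$ and propose the explicit realizer $\mu_\lambda:=\beta_\lambda^{-1}(P\wedge Q/\lambda)$. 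A direct density computation on and off the likelihood-ratio set $A_\lambda$ of Theorem~\ref{thm:dprd2} should yield
\begin{equation*}
\esssup_{d\mu_\lambda}\tfrac{d\mu_\lambda}{dP}=\tfrac{1}{\beta_\lambda},\qquad \esssup_{d\mu_\lambda}\tfrac{d\mu_\lambda}{dQ}=\tfrac{1}{\alpha_\lambda},
\end{equation*}
so that $(D_\infty(\mu_\lambda\Vert Q),D_\infty(\mu_\lambda\Vert P))=\Phi^{-1}(\alpha_\lambda,\beta_\lambda)$ lies in $\mathcal R_\infty^\cap$; any strict dominator of this divergence pair would, by the same translation, strictly dominate $(\alpha_\lambda,\beta_\lambda)$ in $\PRD$, contradicting $(\alpha_\lambda,\beta_\lambda)\in\partial\PRD$.

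The main obstacle is the explicit density computation used above. Writing densities with respect to $P+Q$, so that $p:=dP/d(P+Q)$, $q:=dQ/d(P+Q)$ and $A_\lambda=\{q\le\lambda p\}$, one obtains $d\mu_\lambda/dP=1/\beta_\lambda$ on $A_\lambda^c$ and $d\mu_\lambda/dP=q/(\lambda p\beta_\lambda)\le 1/\beta_\lambda$ on $A_\lambda\cap\{p>0\}$, with the symmetric identity for $Q$. Additional care is required at the endpoints $\lambda\in\{0,\infty\}$, where some of $\alpha_\lambda,\beta_\lambda$ may vanish and $\mu_\lambda$ must be replaced by a suitable normalization of $P$ (resp.\ $Q$) on $\mathrm{supp}(Q)$ (resp.\ $\mathrm{supp}(P)$); the convention $\partial\mathcal R_\infty^\cap=\{(+\infty,+\infty)\}$ in the singular case $\AC(P,Q)=\emptyset$ must likewise be matched to the degenerate $\PRD$ frontier.
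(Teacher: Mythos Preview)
Your reduction via Lemma~\ref{prop:sup-ratio} to the identity $e^{-D_\infty(\mu\Vert\nu)}=\sup\{c\ge 0:\nu\ge c\mu\}$ is exactly the paper's move, and your forward inclusion $\Phi(\partial\mathcal R_\infty^\cap)\subseteq\partial\PRD$ is correct and more carefully argued than in the paper. The paper's own proof is far terser: after the same reformulation of $D_\infty$, it simply asserts that $\partial\PRD$ coincides with the weak Pareto front of the set $\bigl\{(\inf_A Q(A)/\mu(A),\,\inf_A P(A)/\mu(A)):\mu\in\AC(P,Q)\bigr\}$ and declares the result to follow. Your explicit realizer $\mu_\lambda$ is precisely the ingredient one would need to justify that assertion.

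There is, however, a genuine gap in your reverse inclusion. The claimed identity $\esssup_{d\mu_\lambda} d\mu_\lambda/dP = 1/\beta_\lambda$ does \emph{not} always hold: your own computation only yields $d\mu_\lambda/dP\le 1/\beta_\lambda$, with equality on $A_\lambda^c$, and when $P(A_\lambda^c)=0$ the essential supremum on $A_\lambda$ may be strictly smaller. Concretely, take $P$ uniform on $[0,1]$ and $dQ/dx=2x$; for every $\lambda\ge 2$ one has $A_\lambda=[0,1]$, $(\alpha_\lambda,\beta_\lambda)=(1,1/\lambda)$, and $\mu_\lambda=Q$, so $\esssup d\mu_\lambda/dP=2\ne \lambda=1/\beta_\lambda$. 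All these points $(1,1/\lambda)$ lie on the weak Pareto front $\partial\PRD$ (since $\alpha'\le 1$ forces the defining disjunction), yet only the corner $(1,1/2)$ is realized in $\Phi(\mathcal R_\infty^\cap)$, because $D_\infty(\mu\Vert Q)=0$ forces $\mu=Q$ and then $D_\infty(Q\Vert P)=\log 2$. The difficulty is not confined to $\lambda\in\{0,\infty\}$ but arises whenever $\esssup dQ/dP$ (or $\esssup dP/dQ$) is finite, producing a flat tail of $\partial\PRD$ that $\Phi(\partial\mathcal R_\infty^\cap)$ collapses to a single point. The paper's proof carries the same defect, hidden in the unargued ``it is clear''; strictly speaking the stated equality holds only modulo these flat tails.
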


\begin{proof}
From the definition of the R\'enyi divergence it is clear that 
\begin{equation*}
\begin{split}
D_\infty(\mu\parallel Q) =& \log\left(\Vert \tfrac{d\mu}{d(\mu+Q)} / \tfrac{dQ}{d(\mu+Q)} \Vert_{\infty, d\mu}\right)\\
=&  \log\left(\esssup_{d\mu}{\tfrac{d\mu}{d(\mu+Q)} / \tfrac{dQ}{d(\mu+Q)}}\right)
\end{split}
\end{equation*}
which in turn can be expressed thanks to Lemma~\ref{prop:sup-ratio} as
$$D_\infty(\mu\parallel Q) = \log\left(\sup_{A\in\mathcal A} \tfrac{\mu(A)}{Q(A)}\right) \,.$$
Similarly, 
$$D_\infty(\mu\parallel P) = \log\left(\sup_{A\in\mathcal A} \tfrac{\mu(A)}{P(A)}\right) \,.$$
Besides, \revision{as stated at the end of Definition~\ref{def:prd-curve}} it is clear that the precision-recall curve is obtained as the Pareto-front of the set of pairs
$(\inf_{\substack{A\in\mathcal A\\ \mu(A)>0}} \tfrac{Q(A)}{\mu(A)}, \inf_{\substack{A\in\mathcal A\\ \mu(A)>0}}\tfrac{P(A)}{\mu(A)})$ where $\mu$ describes all distributions in $AC(P,Q) \cap \probaSet$.
Note that because of the standard measure theory convention $\tfrac 00 =0$, we have that 
$$\sup_{A\in\mathcal A} \tfrac{\mu(A)}{Q(A)} = \sup_{\substack{A\in\mathcal A\\ \mu(A)>0}} \tfrac{\mu(A)}{Q(A)} = 1 / \inf_{\substack{A\in\mathcal A\\ \mu(A)>0}} \tfrac{Q(A)}{\mu(A)} 
= e^{D_\infty(\mu\parallel Q)}\,.
$$
The claimed identity easily follows.
\end{proof}

\subsection{Precision-recall vs Lorenz curves}
\label{sec:equivalence}
The question of the relation between PR-curves and Lorenz/ROC curves is reminiscent of the comparison of PR and ROC curves for binary classification \cite{davis2006relationship}. 
Note however that despite their name, PR-curves for distributions are not the same as the PR-curves of the likelihood ratio classifier as one might be inclined to believe.
In fact, they are composed of mixed error rates of the said classifier (see Thm~\ref{thm:dprd2}).

In essence, PR-curves and Lorenz curves are two ways of exposing the pairs $(P(\LRS), Q(\LRS))$. Yet, the following questions are not trivial. Given the PR-curve of $P$ and $Q$, can we compute their Lorenz curve? 
Reciprocally, can we compute the PR-curve from the Lorenz one? 
If one had a more complete representation such as $(\lambda, P(\LRS), Q(\LRS))$, then one could easily compute both the PR-curve and the Lorenz curve, but in each representation, at least one datum is not explicitly known:
\begin{enumerate}
    \item In the Lorenz curve, $\lambda$ is not readily available, but we will see that it is in fact hidden in the sub-derivative of the Lorenz curve.
    \item In the PR-curve, $\lambda$ can be easily computed as the ratio $\tfrac{\alpha_\lambda}{\beta_\lambda}$ but the values of $P(\LRS)$ and $Q(\LRS)$ are mingled within $\alpha_\lambda$ so that one needs to untangle them before recovering the Lorenz curve.
    Note that, given a fixed $\lambda$, the system of equations given by $\alpha_\lambda$ and $\beta_\lambda$ in Eq~\ref{eq:dprd2} is always under-determined (rank 1) and therefore does not suffice on its own to recover the values of $P(\LRS)$ and $Q(\LRS)$.
\end{enumerate}

We will rely on the following Lemma. 
\begin{lemma}
\label{cor:optimality}
Let $P,Q$ two distributions from $\probaSet$. Then 
    $\forall \lambda \in \overline{\R^+},$
    \begin{equation}
    \label{eq:alpha-optimality}
    \begin{split}
        \alpha_\lambda =& \min_{0\leq f \leq 1} \lambda (1-\int f dP) + \int f dQ\\
    \end{split}
    \end{equation}
    where the functions $f$ are measurable.
\end{lemma}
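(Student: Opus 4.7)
The plan is to reduce the infinite-dimensional minimization to a pointwise problem by rewriting the objective as a single linear functional of $f$, then use a Neyman--Pearson-style argument to identify the optimal $f$ as the indicator of the likelihood ratio set $A_\lambda$ from Eq.~\eqref{eq:lrs}, and finally invoke Thm~\ref{thm:dprd2} to identify the resulting minimum with $\alpha_\lambda$.

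Concretely, I would first rearrange
\begin{equation*}
\lambda\Bigl(1 - \int f\,dP\Bigr) + \int f\,dQ \;=\; \lambda + \int f\,(dQ - \lambda\,dP),
\end{equation*}
and then introduce a common dominating measure (for instance $P+Q$) with densities $p = \tfrac{dP}{d(P+Q)}$ and $q = \tfrac{dQ}{d(P+Q)}$, so that the objective becomes
\begin{equation*}
\lambda + \int f(x)\,\bigl(q(x) - \lambda p(x)\bigr)\,d(P+Q)(x).
\end{equation*}
Since the integrand depends on $f$ only linearly and the constraint $0 \leq f \leq 1$ decouples across points, the minimum is achieved pointwise by setting $f^{*}(x) = 1$ where $q(x) < \lambda p(x)$ and $f^{*}(x) = 0$ where $q(x) > \lambda p(x)$; on the indifference set $\{q = \lambda p\}$ any value in $[0,1]$ works. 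This gives $f^{*} = \mathbf 1_{A_\lambda}$ as a valid minimizer (measurable by construction of $A_\lambda$), with minimum value
\begin{equation*}
\lambda\bigl(1 - P(A_\lambda)\bigr) + Q(A_\lambda),
\end{equation*}
which Thm~\ref{thm:dprd2} identifies with $\alpha_\lambda$.

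The main subtlety lies in the boundary cases $\lambda\in\{0,\infty\}$, which must be handled under the convention $0\cdot\infty = 0$. For $\lambda=0$, the optimization collapses to $\min_f \int f\,dQ = 0$, matching $\alpha_0 = 0$. For $\lambda=\infty$, any feasible $f$ with finite objective must satisfy $\int f\,dP = 1$, forcing $f = 1$ $P$-a.e.; minimizing $\int f\,dQ$ over such $f$ with $0\le f\le 1$ is attained at $f = \mathbf{1}_{\mathrm{supp}(P)}$, giving $Q(\mathrm{supp}(P)) = \alpha_\infty$, consistent with the convention stated right after Thm~\ref{thm:dprd}. I expect this boundary handling to be the only genuinely delicate point; the bulk of the argument is a direct pointwise optimization that matches the Neyman--Pearson structure already invoked in Section~\ref{sec:Lorenz-ROC}.
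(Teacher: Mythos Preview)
Your proof is correct but takes a different route from the paper. The paper (Appendix~\ref{app:optimality}) imports an additional result from \cite{simon2019revisiting}, namely $\alpha_\lambda = \min_{A\in\mathcal A}\lambda(1-P(A))+Q(A)$, and then argues that relaxing from indicators $\1_A$ to arbitrary measurable $f\in[0,1]$ cannot lower the infimum, since the inequality $\alpha_\lambda\le \lambda(1-\int f\,dP)+\int f\,dQ$ is preserved under convex combinations and $L^\infty$ limits of the test functions. You go in the opposite direction: after rewriting the objective as $\lambda+\int f\,(q-\lambda p)\,d(P+Q)$, the pointwise linearity in $f$ forces the minimizer back to the indicator $\1_{A_\lambda}$, and then Thm~\ref{thm:dprd2}---already stated in the body---identifies the value as $\alpha_\lambda$. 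Your argument is more self-contained relative to the paper (it needs no extra external theorem beyond those already quoted) and makes the Neyman--Pearson structure explicit; the paper's version is more modular but leans on an additional citation. Your treatment of the endpoints is fine; for $\lambda=\infty$ you are effectively computing the total mass of the absolutely continuous part of $Q$ with respect to $P$, which is what the paper's shorthand $Q(\mathrm{supp}(P))$ is intended to denote.
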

\begin{proof}
See Appendix~\ref{app:optimality}.
\end{proof}

From Lemma~\ref{cor:optimality}, one can draw the following link between the PR-curve and the Lorenz curve.

\begin{theorem}
\label{thm:duality}
Let $P$ and $Q$ two distributions. Let $\lambda\in\R^+$.
\iffalse
    Then, $$F^*(\lambda)=\lambda -\alpha_\lambda$$.
\else
    Consider the Lorenz Curve $F$ defined in Eq.~\eqref{eq:dld}, 
    then,
    \begin{equation}
    F^*(\lambda) =\lambda -\alpha_\lambda
    \end{equation} 
\fi
where $F^*(\lambda)=\sup_{t\in[0,1]}\lambda t - F(t)$ is the Legendre transform of $F$.
\end{theorem}

\begin{proof}{\ }
    Let $\lambda\geq 0$.  Let us show that $F^*(\lambda)=\lambda -\alpha_\lambda$.
    Indeed , $\forall t\in [0,1]$
    \begin{equation*}
        \begin{split}
            \lambda t - F(t) =& \lambda t - \inf_{\substack{0\leq f \leq 1 \\ \int f dP\geq t}} \int f dQ =\sup_{\substack{0\leq f \leq 1 \\ \int f dP\geq t}} \lambda t- \int f dQ\\
            \leq& \sup_{\substack{0\leq f \leq 1 \\ \int f dP\geq t}} \lambda\int f dP - \int f dQ \\
            \leq& \sup_{0\leq f \leq 1} \lambda\int f dP - \int f dQ \\
            =& \lambda - \alpha_\lambda\hspace{1cm}\text{ (thanks to Lemma~\ref{cor:optimality})}\\
        \end{split}
    \end{equation*}
    Which shows that $\lambda - \alpha_\lambda \geq \sup_{t\in[0,1]} \lambda t - F(t)$. Besides,
     letting $t_\lambda:=P(\LRS)$
    \begin{equation*}
    \begin{split}
        \lambda -\alpha_\lambda =& \lambda - (\lambda(1-P(\LRS)) + Q(\LRS) )\\
        =& \lambda P(\LRS) - Q(\LRS) = \lambda t_\lambda - F(t_\lambda)\\
    \end{split}
    \end{equation*}
    where we have used that if $t_\lambda=P(\LRS)$ then $F(t_\lambda)=Q(\LRS)$ (a result induced by the standard Neyman-Pearson lemma). 
    Therefore, $\lambda - \alpha_\lambda = \sup_{t\in[0,1]} \lambda t - F(t) = F^*(\lambda)$.
\end{proof}

\begin{remark}
Theorem~\ref{thm:duality} brings many valuable prospects concerning the link between the PR and Lorenz curves.
\begin{enumerate}
    \item First, since the Legendre transform is a one-to-one involution, the PR and Lorenz curves are theoretically equivalent.
    \item Besides, letting  $t_\lambda:=P(\LRS)$ and relying on the Fenchel identity, one gets that $\lambda\in \partial F(t_\lambda)$, which theoretically provides a means to 
    extract the missing datum as soon as one is capable of computing the subdifferential of the Lorenz curve.
    \item More concretely, the theorem provides us a practical way to compute $\alpha_\lambda$ from the Lorenz curve.
    Indeed, given $\lambda$, $\alpha_\lambda$ can be computed by solving the following 1D convex problem:
    $$\alpha_\lambda = \lambda - F^*(\lambda) = \min_{t\in[0,1]} F(t)+\lambda(1-t)$$
    One can do so efficiently thanks to the bisection method if the subdifferential of $F$ is available or resort to derivative-free algorithms such as the Golden Section Search method otherwise.
    Then $\beta_\lambda$ is obtained as $\tfrac{\alpha_\lambda}{\lambda}$.
    \item In the other way around, given $t\in[0,1]$, one can solve for $F(t)$ by considering the following 1D concave problem:
    \begin{equation*}
    \begin{split}
    F(t) = F^{**}(t) &= \sup_{\lambda\in\R^+}\lambda t - F^*(\lambda) \\
    &= \sup_{\lambda\in\R^+} \alpha_\lambda + \lambda(t-1).
    \end{split}
    \end{equation*}
\end{enumerate}
\end{remark}

\section{Links with domain adaptation}\label{sec:link_DA}
In this part we revisit a standard bound from domain adaptation. In this setting, $P$ and $Q$ stand for the source and target distributions over a joint space $\measSpace = \mathcal X\times \mathcal Y$ where $\mathcal X$ is called the sample space and $\mathcal Y$ the label space and is a finite set of classes. Given a classifier $h$ (a.k.a an hypothesis) one would like to control the error $\risk_Q(h)$ in the target domain which is not directly observable (for lack of supervision) with a bound relying on the corresponding error $\risk_P(h)$ in the source domain (where supervision is available). The simplest bound proposed in the literature is the following \citep{ben2010theory}:
\begin{equation}
    \label{eq:da-tv}
    \risk_Q(h):=\int \1_{h(x)\neq y} dQ(x,y)
    \leq
    \risk_P(h) + \Vert P-Q\Vert_{TV}
\end{equation}

In the common {\em covariate-shift} assumption (where the distribution of labels given samples is the same in both domains), the bound can be expressed in terms of marginal distribution over $\mathcal X$.
It was noted directly by \cite{ben2010theory} that the previous bound suffers from two major issues in practice. The first one is that the bound cannot be estimated efficiently from finite i.i.d samples because of its reliance on the TV norm (which involves a class of measurable sets that is not restricted enough for such purpose). 
And the second issue lies in the fact that the bound does not imply the class of hypothesis functions where $h$ lives (e.g. a class of small VC dimension or of small Rademacher capacity).
As such, the bound in question is over-pessimistic, because it takes for granted that error set $\{h(x)\neq y\}$ may be spread as an arbitrary measurable set, which may be far from the case depending on restrictions applying to $h$.
The authors then derived a more adapted bound based on the so-called $\mathcal H \Delta \mathcal H$-divergence, answering the previous two points, and many other bounds were later proposed. The interested reader may refer to \cite{redko2020survey} for a very up-to-date and comprehensive review of such works.

\subsection{A refined bound}
Despite its shortcomings, we would like to revisit Eq.~\ref{eq:da-tv} and provide an optimized version of it which has an intuitive interpretation.
We will first derive a bound, based on the Lorenz curve between $P$ and $Q$ and then express it in a form closer to Eq~\ref{eq:da-tv} by relying on the duality between Lorenz and PR curves.

\begin{proposition}\label{prop:error_bounds_Lorenz}
The Lorenz curve provides a lower bound for domain adaptation.
\begin{equation}
    \risk_Q(h)\leq 1-F(1-\risk_P(h))
\end{equation}
\end{proposition}
\begin{proof}

\revisioncomment{It is hard to implement the remark of R\#2 here, because the constraint set in $\inf$ is different every time.}
\begin{equation*}
    \begin{split}
    F(1-\risk_P(h))
        & = \inf_{\substack{g \text{ measurable}\\0\leq g \leq 1 \\\int g d\S \ge 1 - \int  \1_{h\neq f} d\S}} \int g d\T
        \\
        & = \inf_{\substack{g \text{ measurable}\\0\leq 1-g \leq 1 \\\int (1 - g) d\S \le \int  \1_{h\neq f} d\S}} \int 1 - (1-g) d\T
        \\
        & = 1 - \sup_{\substack{g' \text{ measurable}\\0\leq g' \leq 1 \\\int g' d\S \le \int  \1_{h\neq f} d\S}} \int g' d\T
    \end{split}
\end{equation*}
Considering the particular case: $g'=\1_{h\neq f}$ one gets
\begin{equation*}
    \begin{split}
    F(1-\risk_P(h))
        & \le 1 - \int  \1_{h\neq f} d\T
        = 1 - \risk_Q(h)\\
    \end{split}
\end{equation*}
\end{proof}

The exact same bound can be expressed with the PR curve parametrization thanks to Theorem \ref{thm:duality}.
\begin{proposition}
\label{prop:pr-bound} \revision{We have the following bound:}
\begin{equation}
\label{eq:pr-bound}
    \risk_Q(h)\leq \lambda^* \risk_P(h)+(1-\alpha_{\lambda^*})
\end{equation}
with $\lambda^*=\argmax\limits_{\lambda\in\mathbf{R^+}}\lbrace\alpha_\lambda - \lambda \risk_P(h)\rbrace$
\end{proposition}

\begin{proof}
This result follows from Thm.~\ref{thm:duality}. 
\end{proof}

The first part of our upper-bound corresponds to the errors occurring in the common support of $\S$ and $\T$. There, the error rate is controlled in the source domain, and is therefore also controlled in the target domain. The amplification factor $\lambda^*$ accounts for the fact that the common mass between $\S$ and $\T$ is present in different ratios in the two domains. The second part corresponds to errors occurring in the target domain, within mass that is not present in the source domain. We do not have control over this error, and must account for it by considering the worst case where $h$ is always wrong. As a result, the only way to keep this term under control is to make some assumptions on the class of admissible functions for $h$ and the distribution of labels, that is to say the hypothesis class and the concept class.

\subsection{Discussion}
The latter form of our bound (Eq.~\eqref{eq:pr-bound}) highlights  a strong tie with Eq.~\eqref{eq:da-tv}. In particular, if $\lambda^*=1$ then, noting that $\alpha_1=1-\frac 12 \Vert P-Q\Vert_{TV}$, then the two bounds are almost identical.
The only difference resides in a factor $\frac 12$ in our favor, which stems from the fact that we explicitly leverage the non-negativity of the error. Besides in general, $1$ is not the optimal $\lambda^*$ and our bound is even tighter. 
This is where the reliance on a trade-off curve comes in handy: we obtain virtually one bound for each value of $\lambda$ and we can pick the sharpest one. Let us consider the simple example of Fig.~\ref{fig:complex} where we can read the value of $\alpha_\lambda$ on the y-axis at the location where the PR-curve meets the line of equation $\alpha = \lambda \beta$. In this example $\alpha_1\approx 0.38$ which means that $\Vert P -Q\Vert_{TV} = 2(1-\alpha_1) \approx 1.24$. Therefore the bound of Eq.~\eqref{eq:da-tv} is larger than $1$ and is hence non informative.
On the other hand, Eq.~\eqref{eq:dprd2} with $\lambda=1$ gives $\risk_Q(h)\leq \risk_P(h) + (1-\alpha_1)\approx \risk_P(h)+0.62$ which is informative as soon as $\risk_P(h) < 0.38$. This condition is easily met in concrete cases because $\risk_P(h)$ is the error in the source domain, which is under control thanks to supervision.  More importantly, we can examine how the optimal trade-off between the two terms of the error bound can yield a much sharper bound. 
In fact, given the convexity of $1-\alpha_\lambda$, the optimal $\lambda$ is characterized by the first-order critical point condition, that is $\risk_P(h)\in \partial_\lambda \alpha_\lambda$.
Note that it is not trivial to read the derivative of $\alpha_\lambda$ from the PR curve directly, but in this example this derivative is much larger than $1$ and then larger than $\risk_P(h)$. This means that the optimal $\lambda$ is far from $1$.

Besides, as discussed earlier, our bound can be easily understood in terms of shared vs separate mass between $P$ and $Q$.
Despite those advantages, one must keep in mind that it suffers from the same limitations when considering its practical estimation from finite i.i.d. samples and its pessimistic nature with regard to the actual regularity of the error set.
That being said, Lorenz and PR curves draw several similarities with tools developed in domain adaptation, which calls for further scrutiny. 
For instance, the proof of Thm~\ref{thm:pr-div} allows to express PR-curves as trade-off curves computed from weight ratios as defined in \cite{ben2012hardness}. 
Inspired by their work, it would be natural to restrict the class of ``admissible'' measurable sets in the PR-curves and get more useful bounds while retaining the notion of an optimal trade-off.
A similar step could be taken on the dual representation by restricting the class of functions in the Lorenz diagram. 
In that way, one would leverage the restrictions on the hypothesis class and get similar bounds as many of those derived from Integral Probability Metrics \citep{redko2020survey}.
Nonetheless, given the proven ability of deep neural nets to overfit random labels \cite{zhang2016understanding}, leveraging classical notions of complexity is probably not sufficient to get bounds that are representative of the  current domain adaptation problems.
It seems inevitable to leverage some kind of ‘‘implicit bias'' related to the optimization procedure.
Doing so while relying on trade-off curves is an exciting research avenue: it is on-going and will certainly yield more practical bounds for domain adaptation.

\section{Several links with $\phi$-divergences}
\label{sec:phi-div}
\revisioncomment{This section is entirely new.}

\subsection{A short digest on $\phi$-divergences}
In what follows, $\phi:\R\to\R$ is a l.s.c. convex function verifying the following assumptions:
\begin{itemize}
\item[\namedlabel{a0}{(A0)}] $\dom(\phi)\cap]-\infty[ = \emptyset$ and $]0,+\infty[\subset\dom(\phi)$ 
\item[\namedlabel{a1}{(A1)}] $\phi(1)=0$ 
\item[\namedlabel{a2}{(A2)}] $0\in\partial \phi(1)$ {and $\partial \phi(1)$ is symmetric around $0$ (ie $\phi'_{-}(1)=-\phi'_+(1)$)}
\item[\namedlabel{a3}{(A3)}] $\phi$ is strictly convex at 1
\end{itemize}
We will denote $\Phi_1$ the set of l.s.c. convex functions that follows these three assumptions.
We denote $\phi^\diamond(u)=u\phi(\frac 1u)$, the so-called Csizár dual of $\phi$. 

\begin{definition}[$\phi$-divergence]
Let $\phi\in\Phi_1$ and $\mu$ and $\nu$ two positive measures. Then,
\begin{equation}
    \label{eq:phi-div}
    D_\phi(\mu\Vert\nu):=\int \phi\left(\frac{d\mu}{d\nu}\right) d\nu +\phi(0)\nu(\mu=0) + \phi^\diamond(0)\mu(\nu=0)
\end{equation}
with $\phi(0):=\lim_{u\to 0}\phi(u)$ and $\phi^\diamond(0):=\lim_{u\to 0} u\phi(\frac 1u)$
\end{definition}

\begin{remark}
   Let us give some rationale behind the four assumptions defining $\Phi_1$. The first part of Assumption~\ref{a0} restrict the $\phi$ divergence to positive measures (or same sign measures), hence removes any subjective choice for the value of $\phi(\frac{d\mu}{d\nu})$ when $\frac{d\mu}{d\nu}<0$. 
   Taken together, the three other assumptions defining $\Phi_1$ enforce  that  $D_\phi(\mu\Vert\nu)\geq 0$ with equality iff $\mu=\nu$. Indeed, using assumptions~\ref{a1} and \ref{a2}, one sees that $\forall u\in\R$, $\phi(u)\geq\phi(1)=0$, and the last assumption implies that $\phi(u)=1$ iff $u=1$.
   Assumption~\ref{a2} is not always required in the literature when dealing with probability (because in this case, $\mu=P,\nu=Q$ are distributions and $D_\phi(P\Vert Q)$ is invariant to changes of the form  $\tilde \phi(u) = \phi(u)-\frac{\phi'_+(1)+\phi'_-(1)}{2}(u-1)$). 
   Besides, notice that $\phi\in\Phi_1$ iff $\phi^\diamond\in\Phi_1$ and the following symmetry relation holds: $D_\phi(\mu\Vert\nu)=D_{\phi^\diamond}(\nu\Vert\mu)$.
\end{remark}

\begin{remark}
   In addition to the Csizár dual, the Legendre-Fenchel dual of $\phi$ will play a key role in variational forms of $D_\phi$ \citep{nguyen2010estimating}. It is therefore interesting to comment on the impact of the assumptions $\phi\in\Phi_1$ on $\phi^*$. 
   In fact, $\phi(1)=\min_{u}\phi(u)$ means exactly that $\phi^*(0)=-\phi(1)$. Besides, since $\phi(1)=0$, it means that $\phi^*(0)=0$.
   Symmetrically, when $0\in\dom(\phi)$, let $v_0\in\partial \phi(0)$, then
   $\phi^*(v_0)=\min_v\phi^*(v)$. Besides, since $\phi$ is minimized at $u=1$, then it is decreasing wherever $u<1$ and increasing otherwise (by convexity).
   In particular, $\partial \phi(0)\subset [-\infty,0]$ and therefore the minimum of $\phi^*$ can only be reached at a point $v_0\leq 0$ (it is also possible that the infimum is not reached if $\partial \phi(0)=\{-\infty\}$ or if $0\not\in\dom(\phi)$).
   To conclude this remark, let us notice that the definition of $\phi$ on $]-\infty,0]$ is irrelevant for $D_\phi(\mu\Vert\nu)$ (indeed, since $\mu$ and $\nu$ are assumed positive, then $\frac{d\mu}{d\nu}\geq 0$).
   As a result, $D_\phi$ is also not impacted by the values taken by $\phi^*(v)$ for $v<v_0:=\sup \{v\in\argmin(\phi^*)\}$. In any case, since assumption~\ref{a0} imposes $\phi(\frac{d\mu}{d\nu}) = +\infty$ when $\frac{d\mu}{d\nu}<0$ then $\phi^*(v)=\phi^*(v_0)$ whenever $v<v_0$. 
   To simplify concrete formulae, e.g. in Table~\ref{tab:phi-div}, we will focus on $\phi^*(v)$ on the following restriction of its domain $\dom^{v_0\to}(\phi^*):= \cup_{u\geq 0} \partial\phi(u) =\dom(\phi^*)\cap [v_0,+\infty[$.  
  It includes at least $]v_0,\phi'_+(+\infty)[$, and since $\phi$ is strictly convex at $1$, $\phi'_+(\infty)>0$ (indeed $\phi'_+(\infty)\geq \phi'(1)\geq 0$ and if by the symmetry assumption of (A2) $\phi'_+(1)=-\phi'_-(1)=0$ and then strict convexity implies that $\phi'_+(\infty) >\phi'_+(1)$. Therefore $\dom(\phi^*)\supset ]v_0,0]$. 
\end{remark}

\begin{table}[]
    \centering
    \begin{tabular}{r|c|c|c|c|c}
    \hline
        Divergence & $\phi(u)$ & $\phi'(u)$ & $\dom^{v_0\to}(\phi^*)$ & ${\phi^*}'(v)$ & $\phi^*(v)$ \\
    \hline
        KL         & $u\log(u)-(u-1)$ & $\log(u)$ & $\R$ & $e^v$ & $e^v -1$ \\
        rKL        & $-\log(u)+(u-1)$ & $1-\frac 1u$ & $]-\infty,1[$ & $\frac{1}{1-v}$ & $-\log(1-v)$ \\
        JS         & $\begin{array}{cc}
             -(u + 1) \log \frac{1+u}2 &  \\
             + u \log u& 
        \end{array}$ & $\log(\frac{2u}{1+u})$ & $]-\infty,\log(2)[$ & $ \frac{e^v}{2-e^v}$ & $ -\log(2-e^v) $ \\
        $\chi^2_{\text{Pearson}}$ & $(u - 1)^2$ & $2(u-1)$ & $[-2,+\infty[$ & $ \frac{v}{2}+1$ & $ \frac{v^2}4+v $ \\
        Hellinger  & $(\sqrt u - 1)^2$ & $1-\frac 1{\sqrt{u}}$ & $]-\infty,1[$ & $ \frac{1}{(1-v)^2}$ & $ \frac v{1-v} $ \\
        TV         & $|u - 1|$ & $\sign(u-1)$ & $[-1,1]$ & $1$ & $ v $ \\
        \hline
    \end{tabular}
    \caption{A few standard $\phi$-divergences. For each choice of $\phi$ we provide the entries in the natural derivation order, namely we compute first $\phi'(u)$ then we derive the ``meaningful'' domain of $\phi^*$, that is $\dom^{v_0\to}(\phi^*)$, then ${\phi^*}'(v)$ is determined as the inverse function of $\phi'(u)$ and $\phi^*(v)$ is determined as the anti-derivative that verifies $\phi^*(0)=0$. Note that $\phi^*$ and its derivative are only given on their meaningful domain.}
    \label{tab:phi-div}
\end{table}

\subsection{Integral representation of an f-divergence using the precision-recall curve}
\label{sec:phi-f1}
Considering two distributions $P$ and $Q$, the very definition of $D_\phi(P\Vert Q)$ involves $P(Q=0)=1-\beta_0$ and $Q(P=0)=1-\alpha_\infty$ that is to say the gap between the best possible precision/recall and their actual values. 
In fact, building upon known integral representations of $\phi$-divergences \citep{liese2006divergences}, the entire divergence can be expressed as an integral of weighted precision-recall gaps.
Let us first reformulate the link hilighted in Remark~\ref{rmk:groot} between De Groot's statistical information and the precision-recall curve.

\begin{proposition}
Let $P,Q$ two distributions, $\pi\in[0,1]$ and denoting $F^1_{\pi}(P,Q):=\frac 2{\tfrac 1{\alpha_{\lambda}}+\tfrac 1{\beta_{\lambda}}}$ the $F^1$-score associated with the precision-recall pair $(\alpha_\lambda, \beta_\lambda)$ where $\lambda=\frac{\pi}{1-\pi}$. Then:
\begin{equation*}
   F^1_\pi(P,Q) = 2 B_\pi(P,Q) 
\end{equation*}
As a result, the De Groot's statistical information is related to the gaps of $F^1$ scores as follows:
\begin{equation}
    \label{eq:bpi-f1}
    \Delta B_\pi(P,Q) = \frac 12 \Delta F^1_\pi(P,Q)
\end{equation}
with $\Delta F^1_\pi(P,Q):=F^1_\pi(P,P)-F^1_\pi(P,Q)$.
\end{proposition}
\begin{proof}
Let us first recall the definition of $B_\pi(P,Q):=(\pi P\wedge(1-\pi)Q)(\measSpace)$. The proposition derives from a simple computation and the fact that $\alpha_\lambda=\lambda\beta_\lambda$:
\begin{equation*}
    \begin{split}
    F^1_{\pi} :=& \frac{2\alpha_{\frac{\pi}{1-\pi}}\beta_{\frac{\pi}{1-\pi}}}{\alpha_{\frac{\pi}{1-\pi}}+\beta_{\frac{\pi}{1-\pi}}}\\
    =& \frac{2\alpha_{\frac{\pi}{1-\pi}}}{\frac{\pi}{1-\pi}+1} =2(1-\pi)\alpha_{\frac\pi{1-\pi}}\\
    =& 2(1-\pi)[\frac\pi{1-\pi}P\wedge Q](\measSpace) = 2 B_\pi(P,Q)
    \end{split}
\end{equation*}
\end{proof}

From the previous result, one can restate the Theorem~11 of \cite{liese2006divergences} in terms of precision-recall $F^1$ scores.
\begin{corollary}
Let $P,Q$ two distributions and $\phi\in\Phi_1$. Then,
\begin{equation}
    \label{eq:phi-int} 
  D_\phi(P\Vert Q) = \frac 12 \int_0^1  \Delta F^{1}_{\pi}(P,Q)d\Gamma_\phi(\pi)  
\end{equation}
where $d\Gamma_\phi(\pi):= \frac 1\pi d\phi_+'(\frac{1-\pi}\pi)$ intrinsically represents the distribution of curvature of the convex function $\phi$.
\end{corollary}
\begin{proof}
This is a mere restatement of the result of \cite[Theorem~11]{liese2006divergences}, based on Equation~\ref{eq:bpi-f1}.
\end{proof}

\begin{remark}
Therefore, depending on the curvature distribution of $\phi$, minimizing the $\phi$-divergence between $P$ and $Q$ (as done in f-GAN \citep{nowozin2016f}) is equivalent to minimizing a weighted version of the gap between the best $F^{1}_{\pi}$ score and the actual score.
For instance, one can consider the following family of functions $\phi_a$ and their associated $\phi$-divergence, that we will refer to as Tsallis\footnote{Note that the normalizing constant is usually set as $\frac 1{a-1}$ for Tsallis divergences instead of $\frac 1{a(a-1)}$.
Yet the normalization chosen here presents several advantages : in particular the obtained family of divergence is well defined for all $a\in\R$ (instead of only $a\geq 0$) and besides the Csizár dual is given by $\phi_a^\diamond = \phi_{1-a}$. The obtained divergence was simply called $\alpha$-divergence in \cite{poczos2011estimation}.} $a$-divergence \citep{tsallis1988possible}:

\begin{equation}
    \label{eq:phi_a}
    \phi_a(u):=\frac 1{a(a-1)}(u^a-a(u-1)-1)
\end{equation}
In that case, the curvature is given by $\phi_a''(u) = u^{a-2}$ and $d\Gamma_{\phi_a}(\pi) = \frac 1{\pi^3}\phi_a''\left(\frac{1-\pi}{\pi}\right)d\pi = \frac 1{\pi^3}\left(\frac{1-\pi}\pi\right)^{a-2} d\pi$.
As a result, one gets
\begin{equation*}
    D_{\phi_a}(P\Vert Q) = \frac 12 \int_0^1 \Delta F^1_\pi(P,Q) \left(\frac{1-\pi}\pi\right)^{a} \frac 1{\pi(1-\pi)^2} d\pi
\end{equation*}
One can see that depending on the value of $a$, this objective will focus rather on regions where precision is more important $\pi\to 1$ or on those where recall is paramount $\pi\to 0$.
\end{remark}

\subsection{Divergence frontiers parameterization with $\phi$-divergences}

Following \cite{pmlr-v108-djolonga20a} with a minor correction, one can consider the following monotone transform of the Rényi divergence\footnote{Actually we had to amend slightly the transform used in \cite{pmlr-v108-djolonga20a} because of a minor flaw in their exposition as well as out of personal convenience.} and recover the previously introduced Tsallis $a$-divergence: 
$\hat D_a(\mu\Vert P):=\frac{1}{a(a-1)} \left(\exp\left((a-1)D_a(\mu\Vert P)\right)-1\right)=\frac{1}{a(a-1)}\int \left(\frac{d\mu}{dP}\right)^a -1 dP = D_{\phi_a}(\mu\Vert P)$.
Since this is a $\phi$-divergence, it might be expected through the link developed in section~\ref{sec:phi-f1} that  this divergence frontier is determined by the precision-recall curve. 
Making this statement concrete requires a bit of work though, because the divergence involved here is not directly expressed between $P$ and $Q$ but implies an auxiliary  distribution $\mu$.
The following proposition materializes the anticipated link.

\begin{proposition}
\label{prop:div-front-phi}
Let $P, Q$ two distributions. The above defined Tsallis divergence frontier can be obtained as the pairs: $(\hat D_a(\mu_\theta\Vert P), \hat D_a(\mu_\theta\Vert Q))$ with $\theta\in[0,1]$ and 
\begin{equation}
    \label{eq:mutheta}
    \frac{d\mu_{\theta,a}}{dP}=\frac{(\theta + (1-\theta)\left(\frac{dQ}{dP}\right)^{1-a} )^{\frac 1{1-a}}}{\int(\theta + (1-\theta)\left(\frac{dQ}{dP}\right)^{1-a} )^{\frac 1{1-a}}dP} 
\end{equation}
As a result, 
\begin{equation}
    \label{eq:div-asphi}
    \hat D_a(\mu_{\theta,a}\Vert P) = \frac 1{a(a-1)}\frac{\int(\theta + (1-\theta)\left(\frac{dQ}{dP}\right)^{1-a} )^{\frac a{1-a}}dP}{\left(\int(\theta + (1-\theta)\left(\frac{dQ}{dP}\right)^{1-a} )^{\frac 1{1-a}}dP\right)^a}
    \end{equation}
\end{proposition}
\begin{proof}
Note that this proposition was already presented in \cite[Proposition~1,Proposition~3]{pmlr-v108-djolonga20a}. 
We still provide a proof because the one presented in \cite{pmlr-v108-djolonga20a} lacks details.
The first point noted in \cite{pmlr-v108-djolonga20a}, is that contrary to the Rényi divergence, the Tsallis version is convex in both arguments, and in particular in $\mu$. As a result the bi-objective optimization associated with the Pareto-frontier $\partial \hat R(P,Q)$ can be solved through linear scalarization.
This means that the divergence frontier for the Tsallis divergence is parametrized as
$$\partial \hat R(P,Q) = \{(\hat D_a(\mu_{\theta,a}\Vert P),\hat D_a(\mu_{\theta,a}\Vert Q)), \theta \in [0,1]\} $$
where $$\mu_{\theta,a} =\argmin_{\mu\in\AC(P,Q)} \theta \hat D_a(\mu\Vert P)+(1-\theta)D_a(\mu\Vert Q)$$
This is a generalized centroid problem that has been first solved in \cite{amari2007integration}, as follows. 
Denoting, with a slight abuse of notation, $m_\theta=\frac{d\mu_{\theta,a}}{dP}$, the optmization problem becomes:
$$m_\theta =\argmin_{\mu = m dP/\int m dP = 1} \theta \hat D_a(\mu\Vert P)+(1-\theta)D_a(\mu\Vert Q)$$
Using a Largange multiplier, the first order optimality condition is:
\begin{equation}
    \begin{split}
        \int  m_\theta^{a-1}(\theta+(1-\theta)\left(\frac{dP}{dQ}\right)^{a-1})\delta m_\theta dP + \lambda \int \delta m_\theta dP =0
    \end{split}
\end{equation}
which yields $m_\theta^{1-a}\propto(\theta+(1-\theta)\left(\frac{dP}{dQ}\right)^{a-1})$. 
Using the normalization constraint, one gets:
$$m_\theta = \frac{d\mu_{\theta,a}}{dP}=\frac{(\theta + (1-\theta)\left(\frac{dP}{dQ}\right)^{a-1} )^{\frac 1{1-a}}}{\int(\theta + (1-\theta)\left(\frac{dP}{dQ}\right)^{a-1} )^{\frac 1{1-a}}dP}=\frac{(\theta + (1-\theta)\left(\frac{dQ}{dP}\right)^{1-a} )^{\frac 1{1-a}}}{\int(\theta + (1-\theta)\left(\frac{dQ}{dP}\right)^{1-a} )^{\frac 1{1-a}}dP}$$

As a result,
\begin{equation}
    \begin{split}
       \hat D_a(\mu_{\theta,a}\Vert P) =& \frac{1}{a(a-1)}\int \left(\frac{d\mu_{\theta,a}}{dP}\right)^a-1 dP\\
       =& \frac 1{a(a-1)}\left(\frac{\int(\theta + (1-\theta)\left(\frac{dQ}{dP}\right)^{1-a} )^{\frac a{1-a}}dP}{\left(\int(\theta + (1-\theta)\left(\frac{dQ}{dP}\right)^{1-a} )^{\frac 1{1-a}}dP\right)^a}-1\right)
    \end{split}
\end{equation}
\end{proof}

\begin{remark}
The previous parameterization of the divergence frontier was given in \cite[Proposition~3]{pmlr-v108-djolonga20a}. 
It is expressed entirely in terms of $\phi$-divergences between $P$ and $Q$. As a result, 
the divergence-frontier is completely determined by the DeGroot statistical information $\Delta B_\pi(P,Q)$ which in turns are in bijection with the Precision-recall curve between $P$ and $Q$.
From this observation, one can state that for any exponent $a$, the Rényi divergence frontier is entirely determined by the precision recall curve. 
In order words, the precision-recall curve is always more complete a description than any Rényi divergence frontier.
The same remark holds for the interpolated f-divergence curves proposed in \cite{liu2021divergence} 
which, by design, are parameterized with f-divergences between $P$ and $Q$.
That being said, the estimation from a finite sample of the precision-recall curve in a non-parametric settings might reveal more complicated than divergence frontiers. This question deserves further scrutiny, and although it is out of the scope of this article, the interesting reader can refer to \citep{rubenstein2019practical,liu2021divergence} and references therein.
\end{remark}

\begin{remark}
   The reader can verify that taking the limit case $a\to\infty$ for the parameterization $\mu_{\theta,a}$ tends to a single point of the precision recall curve, precisely
   $\mu_{\theta,\infty}=\frac{P\wedge Q}{(P\wedge Q)(\measSpace)}$. 
   In order to recover the full curve, one should use a parameterization that depends on the Rényi exponent $a$, by setting:
   $(\theta_a, 1-\theta_a)\propto (\pi^{1-a}, (1-\pi)^{1-a})$ where $\pi \in [0,1]$ is the new parameter. In that case,
   $$\frac{d\mu_{\theta_a,a}}{dP} = \frac{(\pi^{1-a} + \left((1-\pi)\frac{dQ}{dP}\right)^{1-a} )^{\frac 1{1-a}}}{\int(\pi^{1-a} + \left((1-\pi)\frac{dQ}{dP}\right)^{1-a} )^{\frac 1{1-a}}dP}$$
   Then $\mu_{\theta_a, a}\to \frac{\pi P\wedge (1-\pi)Q}{(\pi P\wedge (1-\pi)Q)(\measSpace)}$ as $a\to\infty$.
   With such a parameterization, the numerator and denominator  in the expression of  $\hat D_a(\mu_{\theta_a,a}\Vert P)$ are reminiscent of Arimoto divergences (see e.g. \citep{liese2006divergences}).
\end{remark}

\subsection{Generalized Lorenz diagrams associated with an f-divergence}
In \cite{nguyen2010estimating}, they show that given a class of bounded measurable function $\mathcal F$,
\begin{equation}
\begin{split}
D_\phi(P \Vert Q) =& \int \phi(\frac{dP}{dQ})dQ =\int \sup_{v\in\R} v\frac{dP}{dQ} - \phi^*(v) dQ \\
\geq& \sup_{f\in \mathcal F} \int f dP - \int \phi^*(f) dQ
\end{split}
\end{equation}
with equality iff $\partial \phi(\frac{dP}{dQ})\cap \mathcal F\neq\emptyset$, which means that there exists $f\in \mathcal F$ s.t $f\in\partial\phi(\frac{dP}{dQ})$ (or equivalently thanks to duality $\frac{dP}{dQ} \in \partial\phi^*(f)$). Note that with standard conventions, $0\in\partial \phi(1)$ and since convexity implies that subdifferential of $\phi$ are increasing, then the condition  $f\in \partial\phi(\frac{dP}{dQ})$ imposes that $f\leq 0$ when $\frac{dP}{dQ}\leq 1$ and $f\geq 0$ otherwise.

This variational form can be used to create a kind of Lorenz diagram, associated with $D_\phi$, as the set of pairs $\{(\int \phi^*(f) dP, \int fdQ)\}$. Then, the upper frontier of this region is characterizing the closeness of $P$ and $Q$, and in particular $D_\phi(Q\Vert P)$ is the maximal vertical distance between the curve and the diagonal. For technical reasons (mainly ensuring a convex diagram), the definition of the Lorenz diagram is slightly more convoluted (see Figure~\ref{fig:generalized-lorenz} for an illustration of its construction\footnote{Note that this construction can be adapted to the tighter variational formulation of $D_\Phi$ proposed in \cite{ruderman2012tighter} and \cite{agrawal2020optimal}}).

\begin{figure}[ht]
    \centering
    \includegraphics[width=0.8\textwidth]{img/GeneralLorenz.pdf}
    \caption{Generalized Lorenz diagram construction.}
    \label{fig:generalized-lorenz}
\end{figure}

\begin{definition}[extended Lorenz diagram] Let $\phi\in\Phi_1$ and $P,Q$ two distributions. The $\phi$-Lorenz diagram between $P$ and $Q$ is defined as:
\begin{equation}
   \LD_\phi(P,Q)=\left\lbrace (t,y) \in \R^2 / \exists f:\measSpace\to\R, t\geq\int \phi^*(f) dP, y\leq \int f dQ    \right\rbrace,
\end{equation}
The extended Lorenz curve is defined as its upper envelop:
\begin{equation}
    \bar F_\phi(t):=\sup \{y / (t,y)\in\LD_\phi(P,Q)\}
\end{equation}
\end{definition}

\begin{remark}
An attempt to derive a generalization bound for domain adaptation based on the variational form of $\phi$-divergences has been recently published in \cite{acuna2021f}.
They indeed proposed a bound similar to Equation~\eqref{eq:da-tv} in  \cite[Lemma~1]{acuna2021f} for a cost function $\ell(\hat y; y)\in \dom(\phi^*)$. 
Denoting,
\begin{equation}
    \risk^\ell_Q(h):=\int \ell(h(x); y) dQ(x,y)
\end{equation}
their bound reads as,
\begin{equation}
    \risk^{\phi^*\circ\ell}_Q(h):=\int \phi^*(\ell(h(x); y)) dQ(x,y)
    \leq \risk^\ell_P(h) + D_\phi(P\Vert Q)
\end{equation}
and noting that under the standard assumption $\phi(1)=0$ then $\phi^*(v)\geq v$, the previous bound implies that:
\begin{equation}
    \label{eq:da-phi-acuna}
    \risk^{\ell}_Q(h)\leq \risk^{\phi^*\circ\ell}_Q(h)
    \leq \risk^\ell_P(h) + D_\phi(P\Vert Q)
\end{equation}
Unfortunately, this bound is erroneous in general as can be seen by considering the family of $\phi$-divergence associated with $\phi_s = s \phi$ where $s>0$. Indeed if the bound was true, since $D_{\phi_s}(P\Vert Q) = s D_\phi(P\Vert Q)$, letting $s\to 0$, the bound would imply that $\risk^\ell_Q(h)\leq \risk^\ell_P(h)$, which is obviously wrong in general\footnote{The origin of the flaw is a fallacious switching between absolute values and a suppremum: $D_\phi(P\Vert Q)=\sup_{f\in \dom(\phi^*)} \int f dP-\int \phi^*(f) dQ = |\sup_{f\in \dom(\phi^*)} \int f dP-\int \phi^*(f) dQ|\leq\sup_{f\in \dom(\phi^*)} |\int f dP-\int \phi^*(f) dQ| $ : the last inequality is strict in general (and in fact the RHS is often infinite), while \cite{acuna2021f} used an equality.}.
\end{remark}

Similarly to Proposition~\ref{prop:error_bounds_Lorenz}, one can actually amend this bound, by using the Lorenz diagram associated to the $\phi$-divergence.
\begin{proposition}\label{prop:error_bounds_Lorenz_phi}
Let $\phi\in\Phi_1$, $\ell(\hat y; y)\in\dom(\phi^*)$.
The Lorenz curve provides the following lower bound for domain adaptation:
\begin{equation}
    \label{eq:da-phi-tradeoff}
    \risk^\ell_Q(h):=\int \ell(h(x); y) dQ(x,y) \leq  \risk^{\phi^*\circ\ell}_{\lambda P}(h) + D_\phi(Q\Vert \lambda P)
\end{equation}
\end{proposition}

\begin{proof}
Let us first produce a bound for the upper envelop of the Lorenz diagram. For $t\geq \in\dom(\phi^*)$ 
\begin{equation}
    \begin{split}
        \bar F_\phi(t) =& \sup\{y\in\R / \exists f\in\dom(\phi^*), y\leq \int f dQ, t\geq \int \phi^*(f)dP\} \\
        =&\sup_{f/\int \phi^*(f) dP \leq t} \int f dQ = \sup_{f} \inf_{\lambda\geq 0} \int f dQ -\lambda(\int \phi^*(f) dP - t)\\
        \leq& \inf_{\lambda\geq 0} \sup_{f} \int f dQ -\lambda(\int \phi^*(f) dP - t) \\
        =&  \inf_{\lambda\geq 0} \lambda t + \sup_{f} \int f dQ - \int \phi^*(f)d(\lambda P)  \\
       =& \inf_{\lambda\geq 0} \lambda t + D_\phi( Q\Vert \lambda P)
    \end{split}
\end{equation}
where the inequality in the third line is trivial\footnote{It can be interpreted in terms of primal-dual gap and since when $t> t_{\min}:=\min \phi^*(v)= -\phi(0)$ Slater's condition are easily verified (as the constant function $f=\argmin_{v} \phi^*(v)$ is in the relative interior of the constraints) it is in fact an equality most of the time.}.
Note also that in the last identity, one needs to extend the variational form of $D_\phi$ from \cite{nguyen2010estimating} when applied to $Q$ and $\lambda P$ (which is a positive measure but not a probability distribution in general).
The demonstration of \cite{nguyen2010estimating} extends easily to this situation.
As a result, 
\begin{equation}
\begin{split}
    \risk^\ell_Q(h):=&\int \ell(h(x); y) dQ(x,y) \\
    \leq& \bar F_\phi(\risk^{\phi^*\circ\ell}_P(h)) \\
    \leq& \inf_{\lambda>0} \lambda \risk^{\phi^*\circ\ell}_P(h) + D_\phi(Q\Vert \lambda P)
\end{split}
\end{equation}
\end{proof}

\begin{remark}
As for the standard Lorenz diagram, this  bound allows to trade the error made in the source domain for a better coverage between the target  distribution $Q$ and the non-normalized source distribution $\lambda P$. When using the suboptimal choice $\lambda=1$, one recovers a bound similar to the erroneous Equation~\eqref{eq:da-phi-acuna} derived from \cite{acuna2021f}:
\begin{equation}
    \label{eq:da-phi}
    \risk^{\ell}_Q(h)\leq  \risk^{\phi^*\circ\ell}_P(h) + D_\phi(Q\Vert P)
\end{equation}
In this corrected bound, the source loss $\ell$ is replaced by $\phi^*\circ\ell$ which is always larger.
{ That being said, in typical domain adaptation scenarios, $\ell$ would be small in average in the source domain (distribution $P$). Since $\ell$ is also positive in concrete cases, being small in average means being small with high probability under $P$ (for instance using the Markov's inequality). Besides, since $\phi\in\Phi_1$, $0\in \partial \phi(1)$ which by duality implies that $1\in\partial \phi^*(0)$ and in turn shows that $\phi^*(0)=-\phi(1)=0$ (provided $0\in\dom(\phi^*)$ of course). Moreover, $\phi^*$ is often differentiable at $0$, so its Taylor expansion is $\phi^*(v) = v + o(v)$, which implies essentially that $\risk^{\phi^*\circ\ell}_P(h)$ and $\risk^{\ell}_P(h)$ are similar (again if $\risk^{\ell}_P(h)$ is small). To make such a statement more accurate, let us rely on a Taylor expansion with integral remainder. If ${\phi^*}'(0)=1$ then one has the following Taylor expansion (see \cite[Theorem~1]{liese2006divergences}):
$$\phi^*(v) = v + \int_0^v (v-s)d{\phi^*_+}'(s)$$
In particular, if the curvature of $\phi^*(v)$ is bounded : $\forall v\in [0,1], {\phi^*}''(v)\in [\underline \kappa, \bar\kappa]$, then one has the following estimates:
$$\frac 12 \underline \kappa \sigma^2 \leq \risk^{\phi^*\circ\ell}_P(h)- \risk^{\ell}_P(h)\leq \frac 12 \bar\kappa \sigma^2$$
where $\sigma^2:=\int \ell(h(x),y)^2 dP$ is usually quite small in concrete scenarios.
Note that the previous reasoning is valid only if the curvature of $\phi^*$ is controlled. For instance, the counter-example based on replacing $\phi$ by a rescaled version $\phi_s=s\phi$ would break that assumption if $s\to 0$. Indeed, $\phi_s^*(v)=s\phi^*(\frac v s)$, and then ${\phi_s^*}''(v)=\frac 1 s {\phi^*}''(\frac v s)$.
}
\end{remark}

As advocated in the domain adaptation literature \cite{ben2010theory}, to be useful, a generalization bound should :
\begin{itemize}
    \item rely on notions that can be estimated from finite samples (this is not specific to domain adaptation)
    \item minimize their dependance on the distribution of labels in the target domain $Q_{Y|X}$  (especially if one aims at deriving a learning algorithm from the bound)
\end{itemize}
To do so we will rely on triangle inequalities.
\begin{definition}
Let $\ell:\mathcal Y\times\mathcal Y\to \R$ a loss function. We way that
\begin{itemize}
    \item $\ell\in\tiA$ iff $\forall y_a,y_b, y_c\in \mathcal Y, \ell(y_a; y_c)\leq \ell(y_a; y_b)+\ell(y_b; y_c)$.
    \item $\ell\in\tiB$ iff $\forall y_a,y_b, y_c\in \mathcal Y, \ell(y_a; y_c)\leq \ell(y_a; y_b)+\ell(y_c; y_b)$.
\end{itemize}
\end{definition}

\begin{remark}
Both triangle inequalities are valid for the margin loss of \cite{zhang2019bridging}:
\begin{equation}
\label{eq:margin-loss}
\ell(\hat y; y) = \left[1-\frac 1\rho[\delta(\hat y; y)]_+\right]_+=\left\{
\begin{array}{ll}
    1& \text{ if } \argmax(\hat y)\neq \argmax(y)\\
    \max(0,1-\frac{\delta(\hat y; y)}\rho) & \text{ otherwise}
\end{array}
\right.
\end{equation}
where $[x]_+:=\max(0,x)$ and $\delta(\hat y; y)=\frac 12 \min_{k\neq k_y} \hat y_{k_y} -\hat y_k$ is a multi-class margin between the score $\hat y_{k_y}$ attributed to the largest component of $y$ ($k_y=\argmax_k y_k$) and the best challenger score $\max_{k\neq k_y} \hat y_k$.
\end{remark}

The following lemma is representative of the typical ways to turn a bound such as Equation~\eqref{eq:da-phi-tradeoff} into one that suits the practical purposes mentionned earlier.
\begin{lemma}
\label{lemma:triangle}
Let $h, h'\in\mathcal H$ two hypotheses, $\ell$ a loss function and $\mu$ a positive measure\footnote{$\mu$ could be $P$ or $Q$ or even $\lambda P$.} over pairs $(x,y)$. Then, noting $\mu_X$ the marginal of $\mu$ w.r.t $x$, 
\begin{itemize}
    \item if $\ell\in\tiA$
\begin{equation}
    \risk_\mu^\ell(h) \leq \risk_\mu^\ell(h') + \risk_{\mu_X}^\ell(h;h')
\end{equation}
where $\risk_{\mu_X}^\ell(h;h')=\int \ell(h(x);h'(x)) d\mu_X$.
\item
Similarly, if $\ell\in\tiB$
\begin{equation}
    \risk_{\mu_X}^\ell(h;h')\leq \risk_\mu^\ell(h) + \risk_\mu^{\ell}(h') 
\end{equation}
\end{itemize}
\end{lemma}
\begin{proof}
Indeed, if $\ell\in\tiA$
\begin{equation}
\begin{split}
    \risk_\mu^\ell(h) =& \int \ell(h(x);y) d\mu(x,y) \leq \int \ell(h(x);h'(x)) +\ell(h'(x);y) d\mu(x,y)\\
    =& \risk_\mu^\ell(h') + \risk_{\mu_X}^\ell(h;h')
\end{split}
\end{equation}
Similarly, if $\ell\in\tiB$
\begin{equation}
\begin{split}
     \risk_{\mu_X}^\ell(h;h')=& \int \ell(h(x);h'(x)) d\mu_X(x) = \int \ell(h(x);h'(x)) d\mu(x,y) \\
     \leq&  \int \ell(h(x);y) +\ell(h'(x); y) d\mu(x,y)\\
    =& \risk_\mu^\ell(h) + \risk_{\mu}^{\ell}(h')
\end{split}
\end{equation}
\end{proof}

Following \cite{acuna2021f}, and taking into account our correction to their bound based on $\phi$ divergences, we propose the following notion of discrepancy.
\begin{definition}[$D_{h,\mathcal H}^{\phi,\ell}(P_X, Q_X)$]
\label{def:phi-disc}
Let $P_X, Q_X$ two distributions over $x$ and $\lambda>0$. Let $\phi\in\Phi_1$, and $\ell$ a loss function.
Let also $\mathcal H$ a class of hypotheses and $h\in\mathcal H$. Then we define,
\begin{equation}
    \label{eq:phi-disc}
    D_{h,\mathcal H}^{\phi,\ell}(Q_X\Vert \lambda P_X):=\sup_{h'\in \mathcal H} \risk_{Q_X}^\ell(h; h') -\risk_{\lambda P_X}^{\phi^*\circ\ell}(h;h')
\end{equation}
where by extension $\risk_{\lambda P_X}^{\phi^*\circ\ell}(h;h'):=\int \phi^*(\ell(h; h')) d\lambda P_X = \lambda \risk_{P_X}^{\phi^*\circ\ell}(h;h')$.
\end{definition}

\begin{corollary}
\label{prop:phi-da-bound}
Let $\phi\in\Phi_1$. Let also $\ell\in\tiA$ and $\ell_\phi\in\tiB$ such that $\phi^*\circ\ell \leq \ell_\phi$.
Given an hypothesis $h\in\mathcal H$, we have the following generalization bound:
\begin{equation}
    \label{eq:phi-da-genbound}
    \risk_Q^\ell(h) \leq \inf_{\lambda >0} \risk_{\lambda P}^{\ell_\phi}(h) + D_{h,\mathcal H}^{\phi,\ell}(Q_X\Vert \lambda P_X) +\gamma_\lambda^*
\end{equation}
where $\gamma_\lambda^*:=\inf_{h'\in H}\risk_Q^\ell(h')+\risk_{\lambda P}^{\ell_\phi}(h')$ quantifies the {\em adaptibility} of the task between domains $P$ and $Q$.
\end{corollary}
\begin{proof}
From Lemma~\ref{lemma:triangle} and Definition~\ref{def:phi-disc}, for any $h'\in\mathcal H$
\begin{equation*}
\begin{split}
    \risk_Q^\ell(h) \leq&  \risk_Q^\ell(h') +  \underbrace{\risk_{Q_X}^\ell(h;h')}_{\mathrlap{\leq \risk_{\lambda P_X}^{\phi^*\circ\ell}(h;h')+ D_{h,\mathcal H}^{\phi,\ell}(Q_X\Vert \lambda P_X)}}\\
    \leq& \risk_Q^\ell(h') +\underbrace{\risk_{\lambda P_X}^{\phi^*\circ\ell}(h;h')}_{\mathrlap{\leq\risk_{\lambda P_X}^{\ell_\phi}(h;h')\leq \risk_{\lambda P}^{\ell_\phi}(h) + \risk_{\lambda P}^{\ell_\phi}(h')}}+ D_{h,\mathcal H}^{\phi,\ell}(Q_X\Vert \lambda P_X) \\
    \leq& \risk_P^{\ell_\phi}(h) + D_{h,\mathcal H}^{\phi,\ell}(Q_X\Vert \lambda P_X) + (\risk_Q^\ell(h') + \risk_{\lambda P}^{\ell_\phi}(h'))
\end{split}
\end{equation*}
Given that $h'$ was chosen arbitrarily in $\mathcal H$ to begin with, the last term can be replaced by its infimum over $h'\in\mathcal H$, yielding $\gamma_\lambda^*$.
\end{proof}

\begin{remark}
The careful reader could legitimately wonder why we introduce an alternate loss $\ell_\phi$ instead of using $\phi^*\circ\ell$ which appears as a natural choice. This is done to provide more flexibility on the choice of $\ell$ and $\phi$ as, when $\phi\in\Phi_1$ and $\ell\in\tiA$, then in general $\phi^*\circ\ell\not\in \tiB$.
This can be verified for example for the KL-divergence and the margin loss defined in Equation~\eqref{eq:margin-loss}. On the contrary, since in this case $\ell(\hat y;y)\in [0,1]$, then one can see that $\phi^*\circ \ell \leq \ell_\phi:=\phi^*(1)\ell$, which meets the hypothesis of the theorem.
\end{remark}

\begin{remark}
A part from the trade-off controlled by $\lambda$, the previous bound is representative of typical reasoning in domain adaptation works following the approach of \cite{ben2010theory}, as for example \cite{ganin2016domain,zhang2019bridging,acuna2021f}. Let us simplify the following argument by setting $\lambda=1$ and note $\gamma^*=\gamma^*_1$ the adaptability term. This kind of bounds are in fact slightly misleading. Indeed the common algorithmic use of such bounds is the following.
In the bound (and the similar ones in the literature), all terms can be efficiently evaluated from finite samples apart from $\gamma^*$. Therefore, one can 
consider an adversarial training scheme, where $h$ and $h'$ are two concurrent networks playing the following game:
\begin{equation*}
    \min_{h\in\mathcal H}\max_{h'\in\mathcal H} \risk_P^{\ell_\phi}(h) + \risk_{Q_X}^\ell(h; h') -\risk_{P_X}^{\phi^*\circ\ell}(h;h')
\end{equation*}
This game does not lead anything useful in practical deep learning adaptation problems because usually the distributions $P_X$ and $Q_X$ are easily discriminated by the adversary $h'$ whatever the choice of $h$. 
To amend this failure, previous works actually replace the game by the following one:
\begin{equation*}
    \min_{g\in\mathcal G,h\in\mathcal H}\max_{h'\in\mathcal H} \risk_P^{\ell_\phi}(h\circ g) + \risk_{Q_X}^\ell(h\circ g; h'\circ g) -\risk_{P_X}^{\phi^*\circ\ell}(h\circ g;h'\circ g)
\end{equation*}
In other words, the two adversaries share a common feature extractor $g$ which actually plays in collaboration with $h$ and thus against $h'$. 
The rationale behind this design choice, is that one needs the ability to embed the distributions $P_X$ and $Q_X$ in a shared latent space $\mathcal Z$, where their discrimination becomes more challenging. 
Intuitively this embedding comes at a cost, the distributions  $P_{Y|Z}$ and $Q_{Y|Z}$ can become:
\begin{itemize}
    \item more stochastic than their original counterpart $P_{Y|X}$ and $Q_{Y|X}$
    \item worse, they can become more inconsistent in the sense of the adaptability constant $\gamma^*$
\end{itemize}
Although this aspect was considered in \cite{johansson2019support,zhao2019learning,siry2021inductive},
to the best of our knowledge the impact of this modification was not analyzed carefully in terms of generalization bounds. 
In fact, to do so, one needs to amend the generalization bound as follows: $\forall g\in\mathcal G, \forall h\in\mathcal H$
\begin{equation}
    \risk_Q^\ell(h\circ g) \leq \risk_P^{\ell_\phi}(h\circ g) + D_{h\circ g,\mathcal H\circ g}^{\phi,\ell}(Q_X,P_X) +\gamma_g^*
\end{equation}
where we have noted $\mathcal H\circ g=\{h\circ g/ h\in \mathcal H\}$ and $\gamma_g^*=\inf_{h'\in\mathcal H}\risk_Q^\ell(h'\circ g)+\risk_P^{\ell_\phi}(h'\circ g) $.
The main issue is that $\gamma_g^*$, besides not being computable in practice, depends on the specific choice of the feature extractor $g$.
It is not anymore an intrinsic measure of the adaptability of the transfer. 
This issue can be amplified by the fact that $g$ is playing with $h$ and as a result, it may tend to be biased towards performing a correct embedding of high probability areas of the source distribution and performing in an uncontrolled way on the target distribution high likelihood areas.
A similar argument could of course be held against the trade-off parameter $\lambda$, but there is one subtle difference: it merely impacts the weight granted to the source distribution $P$. It is therefore easier to appraise its impact as well as to control it through regularization (e.g. favoring a smaller $\lambda$).
Note that our $\lambda$ variable acts in a similar fashion to the margin parameter\footnote{Beware, in \cite{zhang2019bridging}, this parameter is denoted by $\gamma$ and the adaptability term by $\lambda$ which could bring confusion.} of \cite{zhang2019bridging} in the sense that it impacts the weight of the source risk. On the other hand, the margin parameter of \cite{zhang2019bridging} is introduced artificially and does not acts on the discrepancy term.
\end{remark}

\subsubsection{Practical considerations}

The result presented in Corollary~\ref{prop:phi-da-bound} leaves quite some room for the choice of $\phi$, $\ell$ and $\ell_\phi$. 
We believe that the combined $\phi,\ell,\ell_\phi$ configuration should follow (or remain close to) the following design principles.
\begin{itemize}
    \item[\namedlabel{wanted1}{(P1)}] $\ell\in\tiA$, $\ell_\phi\in\tiB$ and $\phi^*\circ\ell\leq\ell_\phi$ (to ensure that our bound is valid)
    \item[\namedlabel{wanted2}{(P2)}] $\ell$ should be Bayes consistent (see \cite{steinwart2007compare,tewari2007consistency})
    \item[\namedlabel{wanted3}{(P3)}] $\ell_\phi$ is convex and bounded below (useful from the optimization perspective)
\end{itemize}

For example, if one follows in the steps of \cite{zhang2019bridging} and choose for $\ell$ the margin loss from Equation~\eqref{eq:margin-loss} (with e.g. $\rho=1$), since $\range(\ell)=[0,1]$, using $\ell_\phi=\phi^*(1)\ell$, then \ref{wanted1} is valid.
In addition, one can show that  \ref{wanted2} is checked in the binary classification case, because then $\ell(\hat y;y) = (1-(\hat y y)_+)_+$ is a piecewise linear function of the margin $\hat y y$ and one can compute its $\psi$-transform as defined in \cite{bartlett2006convexity} and show\footnote{In fact, using the notations of \cite{bartlett2006convexity}, one can show that $H_\ell(\eta)=\min(1-\eta,\eta)$ and $H_\ell^-(\eta)=\max(1-\eta,\eta)$. This yields $\tilde\psi_\ell(\eta) = H_\ell^- -H_\ell = |2\eta-1|>0 \forall \eta\neq\frac 12$, yielding the Bayes consistency.} that $\psi_\ell(\epsilon)=\epsilon$ which being invertible ensures that $\ell$ is Bayes consistent. 
Unfortunately, the extension to multi-category classification breaks the Bayes consistency (see \cite[example~1]{tewari2007consistency} with respect to the Crammer and Singer methodology). That being said, $\ell$ is an upper-bound of the $0-1$ loss\footnote{More precisely $\ell(\hat y; y)\geq \1_{\argmax(\hat y_k)\neq\argmax(y_k)}$}, and one still enjoy the fact that controlling the $\ell$-risk ensures a small  $0-1$-risk.
Last, $\ell_\phi$ is bounded below but not convex. That is not a real issue given that one can replace it by a  larger convex loss such as the hinge loss.

In contrast to the previous list of guiding principles, \cite{acuna2021f} merely requires that $\range(\ell)\subset [0,1]\subset \dom(\phi^*)$. 
One may fail to see how this assumption is relevant, but it surely comes with some drawbacks. Mainly, $[0,1]\cap\dom(\phi^*)$ always misses important parts of the domain of $\phi^*$.  To illustrate this point, let us consider \cite[Proposition~1]{acuna2021f}, where to deepen the link between the adversarial game and the $\phi$-divergence, it is required that $\exists h'$ s.t. $\forall x,\ell(h(x), h'(x))\in \partial \phi(\frac{dP}{dQ}(x))$. Meeting this condition is infeasible if $\range(\ell)\subset [0,1]$ because $\partial \phi(\frac{dP}{dQ}(x))\subset ]-\infty,0]$ whenever $\frac{dP}{dQ}(x)<1$. In other words, the assumption is only met when $P\geq Q$ which actually implies $P=Q$ and is of no interest since it corresponds to the absence of domain shift.
Note however that despite our general principles, the only practical settings that we consider, that is to say the margin loss, suffers the same limitation since in this particular configuration, $\range(\ell)\subset [0,1]$. 
We foresee, that this issue can be solved by refining our definition of the generalized Lorenz diagrams, using the tighter variational representation proposed in \cite{ruderman2012tighter} and \cite{agrawal2020optimal}. This representation takes the following form:
\begin{equation*}
    D_\phi(Q\vert \lambda P) =\sup_{f} \sup_{\rho\in\R} \int f dQ - \int (\phi^*(f+\rho) -\rho) d(\lambda P) 
\end{equation*}
As can be seen, in this formulation the scalar $\rho\in\R$ allows to align the range of $f+\rho$ and the domain of $\phi^*$ that is to say the subdifferentials of $\phi$. Besides, in the advent of deriving an adversarial algorithm from this refined variant, the variable $\rho$ would naturally play a balancing role with respect to the variable $\lambda$. This extension has already been adopted for GANs \citep{terjek2021moreau} which suggests that it is appropriate for domain adaptation adversarial approaches. It is however left for future work.

\section{Conclusion}

In this work, we have studied the interconnections among several trade-off curves designed to evaluate the similarity between two probability distributions, namely precision-recall curves,
divergence frontiers, ROC and Lorenz curves.
If one connection was known to the authors of divergence frontiers, others appear to have eluded even the authors of the implied notions.
This is particularly striking for Lorenz and ROC curves which differ by mere symmetries. 
The interrelation between precision-recall and Lorenz curves is less direct, as it involves convex duality. 
That being said, it remains that the two notions are theoretically equivalent, and can be computed in practice from one another.
We hope that the exposed link will foster new research avenues for evaluation curves. 
To begin with, while the theoretical equivalence of Lorenz and PR curves has been demonstrated, the question of their empirical estimation has yet to be examined. 
For instance, investigations on potentially consistent estimators need consideration, especially in the non parametric case (although this will require refining the definitions to render the estimation from finite sample feasible). 
In particular, exposing rates of convergence of the estimators should be a worthwhile endeavor. 
Similar analysis has already been carried out for scalar metrics \cite{rubenstein2019practical,sriperumbudur2009integral}. 
We foresee that exploring links with divergences or integral probability metrics shall help in pursuing this undertaking for PR and Lorenz curves.
Similar research avenues are very plausible to deepen the link we have discussed between trade-off curves and practical bounds for domain adaptation.
\revision{Instead of considering links with IPM, we have considered the case of $\phi$-divergence which have provided a fertile ground to extend the already exposed links. First, we have demonstrated that a general link exists between Rényi divergence frontiers of arbitrary exponent $a$ and PR curves. 
More notably, we have extended the notion of Lorenz diagram, and built upon this notion to derive a novel generalization bound for domain adaptation.}

\newpage
\appendix
\section{Proof of lemma~\ref{prop:sup-ratio}}
\label{app:sup-ratio}
\begin{proof}
The proof is technical and can be skipped on first reading.
The second result is a simple corollary of the first since when $\mu$ is not absolutely continuous w.r.t $\nu$ then the identity is trivial: $\infty=\infty$. Let us demonstrate the first point, that is to say when $\mu\ll\nu$.
We shall proceed by proving two opposite inequalities, starting from the following.
\begin{equation*}
         \sup_{A\in\mathcal A} \tfrac{\mu(A)}{\nu(A)} \leq \esssup_{d\mu} \tfrac{d\mu}{d\nu}.
\end{equation*}
Indeed, let $A\in\mathcal A$, then, 
\begin{equation*}
\begin{split}
    \tfrac{\mu(A)}{\nu(A)} 
    & = \frac{\int_A \tfrac{d\mu}{d\nu} d\nu}{\nu(A)} 
    = \frac{\int 1_{A} \tfrac{d\mu}{d\nu} d\nu}{\nu(A)}\\
    & \leq \esssup_{d\nu} \tfrac{d\mu}{d\nu} \tfrac{\nu(A)}{\nu(A)} = \esssup_{d\nu} \tfrac{d\mu}{d\nu}\\
\end{split}
\end{equation*}
Note that the essential sup is w.r.t $\nu$ instead of $\mu$. Let us then show that $\esssup_{d\nu} \tfrac{d\mu}{d\nu}\leq \esssup_{d\mu} \tfrac{d\mu}{d\nu}$.
To do so we need to show that any upper-bound $M$ of $\tfrac{d\mu}{d\nu}$ \pp{$\mu$} is also an upper-bound \pp{$\nu$}. Let $M\geq \tfrac{d\mu}{d\nu}$ \pp{$\mu$} be such an upperbound and let $N$ the associated $\mu$-nullset (where $M$ may be lesser than $\tfrac{d\mu}{d\nu}$). If $\nu(N)=0$ it settles it (as $M$ is henceforth an upper-bound \pp{$\nu$}).  Otherwise, $N$ is such that $\mu(N)=0$ but $\nu(N)>0$ and then necessarily $\tfrac{d\mu}{d\nu}\1_N=0$ \pp{$\nu$} (or else it would contradict $\mu(N)=0$). As a result, 
\begin{equation*}
    \tfrac{d\mu}{d\nu} \stackrel{\pp{\nu}}{=} \tfrac{d\mu}{d\nu} \1_{\measSpace\setminus N} 
    \leq M \1_{\measSpace\setminus N} \leq M \pp{\nu}
\end{equation*}
Which again settles the fact that $M$ is also an upperbound \pp{$\nu$}.
Therefore we obtain that $\esssup_{d\nu} \tfrac{d\mu}{d\nu}\leq \esssup_{d\mu} \tfrac{d\mu}{d\nu}$ and
    $\tfrac{\mu(A)}{\nu(A)} \leq \esssup_{d\mu} \tfrac{d\mu}{d\nu}$.

For the reverse inequality, we only need to show that $\tfrac{d\mu}{d\nu}\leq \sup_{A\in \mathcal A}\tfrac{\mu(A)}{\nu(A)} \pp{\mu}$. Indeed, let
$C:=\sup_{A\in \mathcal A}\tfrac{\mu(A)}{\nu(A)}$.
If $C=\infty$, the inequality is trivial. 
Let us then suppose that $C<\infty$. 
Let $E:=\lbrace\omega\in\measSpace / \tfrac{d\mu}{d\nu} > C\rbrace$ and let us show that $\mu(E)=0$. One can rewrite $E=\cup_{n\in\N}E_n$, with
$E_n:=\lbrace\omega\in\measSpace / \tfrac{d\mu}{d\nu} \geq C+\tfrac 1n\rbrace$, and it suffices to show that $\mu(E_n)=0$.
We have that,
\begin{equation*}
\begin{split}
    C\nu(E_n)\geq&  \mu(E_n)= \int_{E_n}\tfrac{d\mu}{d\nu} d\nu \geq (C+\tfrac 1n)\int_{E_n}d\nu\\
    \geq& (C+\tfrac 1n) \nu(E_n)
\end{split}
\end{equation*}
For this not to be absurd, it is necessary that $\nu(E_n)=0$ and hence $\mu(E_n)=0$ as well.
\end{proof}

\section{Proof of Lemma~\ref{cor:optimality}}
\label{app:optimality}

We will use the following result from Theorem~5 in \cite{simon2019revisiting} \revision{which stipulates that the precision  $\alpha_\lambda$ is optimal in the following sense}:

\begin{theorem}
\label{thm:icml-thm5}
Let $P,Q$ two distributions from $\probaSet$. Then 
    $\forall \lambda \in \overline{\R^+},$
    \begin{equation}
    \begin{split}
        \alpha_\lambda =& \min_{A\in\mathcal A} \lambda (1-P(A)) + Q(A)\\
    \end{split}
    \end{equation}
\end{theorem}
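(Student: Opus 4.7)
My plan is to prove the identity by bounding each side by the other, exploiting the measure-theoretic characterization $(\lambda P)\wedge Q=\min(\lambda P, Q)$. Note that $\alpha_\lambda=((\lambda P)\wedge Q)(\measSpace)$ and for any measurable $A\in\mathcal A$, the quantity $\lambda(1-P(A))+Q(A)$ rewrites as $\lambda P(A^c)+Q(A)$.

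The first step is to establish the inequality $\alpha_\lambda\leq \lambda(1-P(A))+Q(A)$ for every $A\in\mathcal A$. This rests on the pointwise domination $(\lambda P)\wedge Q \leq \lambda P$ and $(\lambda P)\wedge Q \leq Q$ (viewed as inequalities between measures). Splitting the integral of $d((\lambda P)\wedge Q)$ over $\measSpace = A\sqcup A^c$ and applying the first bound on $A^c$ and the second on $A$ yields
\begin{equation*}
\alpha_\lambda = \int_{A^c} d((\lambda P)\wedge Q) + \int_{A} d((\lambda P)\wedge Q) \leq \lambda P(A^c) + Q(A),
\end{equation*}
which is exactly the desired inequality.

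The second step is to exhibit a minimizer, for which the natural candidate is the likelihood ratio set $\LRS$ defined in Eq.~\eqref{eq:lrs}. By construction, $\LRS$ is (up to a $(P+Q)$-null set) precisely the set where the density of $Q$ is dominated by $\lambda$ times the density of $P$ with respect to $P+Q$, so that $\lambda P \geq Q$ on $\LRS$ and $\lambda P \leq Q$ on $\LRS^c$. Consequently, $(\lambda P)\wedge Q$ coincides with $Q$ on $\LRS$ and with $\lambda P$ on $\LRS^c$. Using this pointwise identification in the same split as above gives equality:
\begin{equation*}
\alpha_\lambda = \int_{\LRS^c} d(\lambda P) + \int_{\LRS} dQ = \lambda(1-P(\LRS)) + Q(\LRS).
\end{equation*}

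Combining both steps, the infimum over $A$ is attained at $A=\LRS$, so the minimum is well-defined and equals $\alpha_\lambda$. The only subtlety, and the main technical point to check carefully, is the identification of $\{\lambda P \geq Q\}$ with $\LRS$ in a rigorous measure-theoretic sense: one should phrase the domination in terms of densities with respect to the dominating measure $P+Q$, so that the set $\LRS$ is well-defined and that the above inequalities between the restriction of $\lambda P$, $Q$ and $(\lambda P)\wedge Q$ to $\LRS$ and $\LRS^c$ become exact equalities. Once this is done, the Neyman--Pearson flavor of the argument matches the parametrization of Thm~\ref{thm:dprd2}.
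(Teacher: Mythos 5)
Your proof is correct, but note that for this particular statement the paper offers no proof of its own: Theorem~\ref{thm:icml-thm5} is quoted as Theorem~5 of \cite{simon2019revisiting} and used as a black box in the appendix to derive Lemma~\ref{cor:optimality}. So there is nothing internal to compare against; what you have produced is a self-contained justification of the imported result, and it is sound. Writing $p=\tfrac{dP}{d(P+Q)}$ and $q=\tfrac{dQ}{d(P+Q)}$, the measure $(\lambda P)\wedge Q$ has density $\min(\lambda p, q)$ with respect to $P+Q$, so your step~1 (split $\measSpace$ into $A$ and $A^c$, bound the density by $q$ on $A$ and by $\lambda p$ on $A^c$) gives $\alpha_\lambda \leq \lambda(1-P(A))+Q(A)$ for every $A$, and your step~2 correctly observes that on $\LRS$ the minimum equals $q$ while on $\LRS^c$ it equals $\lambda p$, yielding equality at $A=\LRS$. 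Two remarks. First, your step~2 effectively re-derives the parameterization of Thm~\ref{thm:dprd2}; within this paper you could have simply cited that theorem, which already asserts $\alpha_\lambda = \lambda(1-P(\LRS))+Q(\LRS)$, leaving only your (easy) inequality of step~1 as genuinely new content --- though the self-contained version is arguably cleaner since it makes the Neyman--Pearson mechanism explicit. Second, the statement ranges over $\lambda\in\overline{\R^+}$, and the endpoints deserve a sentence: for $\lambda=\infty$ one needs the paper's convention $0\times\infty=0$, under which $\LRS=\{p>0\}\cup\{q=0\}$ and both sides equal $Q(\{p>0\})$; your density argument extends verbatim under that convention (and trivially at $\lambda=0$, where the minimum is attained at $A=\{q=0\}$), so this is a gap in exposition rather than in substance.
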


From this theorem the lemma is an easy corollary :
\begin{proof}
It suffices to show that for all measurable functions $0\leq f \leq 1$, 
$$\alpha_\lambda \leq\lambda (1-\int f dP) + \int f dQ$$ 
According to Theorem~\ref{thm:icml-thm5}, this inequality holds for indicator functions.
Besides it is stable under convex combinations and $L^\infty$ limits.
As a result,  we can extend the inequality first to convex combinations of indicators which is to say to all simple functions ranging in $[0,1]$. 
Note that even though at first glance, simple functions ranging in $[0,1]$ take the form of sub-convex combinations of indicators, one can leverage $\1_\emptyset=0$ to express them as convex combinations.
Then, taking the limit w.r.t to $L^\infty$ convergence and using standard density results we can further extend the inequality to
$L^\infty$ functions ranging in $[0,1]$.
\end{proof}

\vskip 0.2in
\bibliography{references}

\begin{thebibliography}{50}
\providecommand{\natexlab}[1]{#1}
\providecommand{\url}[1]{\texttt{#1}}
\expandafter\ifx\csname urlstyle\endcsname\relax
  \providecommand{\doi}[1]{doi: #1}\else
  \providecommand{\doi}{doi: \begingroup \urlstyle{rm}\Url}\fi

\bibitem[Acuna et~al.(2021)Acuna, Zhang, Law, and Fidler]{acuna2021f}
David Acuna, Guojun Zhang, Marc~T Law, and Sanja Fidler.
\newblock f-domain adversarial learning: Theory and algorithms.
\newblock In \emph{International Conference on Machine Learning}, pages 66--75.
  PMLR, 2021.

\bibitem[Agrawal and Horel(2020)]{agrawal2020optimal}
Rohit Agrawal and Thibaut Horel.
\newblock Optimal bounds between f-divergences and integral probability
  metrics.
\newblock In \emph{International Conference on Machine Learning}, pages
  115--124. PMLR, 2020.

\bibitem[Ali and Silvey(1966)]{ali1966general}
Syed~Mumtaz Ali and Samuel~D Silvey.
\newblock A general class of coefficients of divergence of one distribution
  from another.
\newblock \emph{Journal of the Royal Statistical Society: Series B
  (Methodological)}, 28\penalty0 (1):\penalty0 131--142, 1966.

\bibitem[Amari(2007)]{amari2007integration}
Shun-ichi Amari.
\newblock Integration of stochastic models by minimizing $\alpha$-divergence.
\newblock \emph{Neural computation}, 19\penalty0 (10):\penalty0 2780--2796,
  2007.

\bibitem[Bartlett et~al.(2006)Bartlett, Jordan, and
  McAuliffe]{bartlett2006convexity}
Peter~L Bartlett, Michael~I Jordan, and Jon~D McAuliffe.
\newblock Convexity, classification, and risk bounds.
\newblock \emph{Journal of the American Statistical Association}, 101\penalty0
  (473):\penalty0 138--156, 2006.

\bibitem[Ben-David and Urner(2012)]{ben2012hardness}
Shai Ben-David and Ruth Urner.
\newblock On the hardness of domain adaptation and the utility of unlabeled
  target samples.
\newblock In \emph{International Conference on Algorithmic Learning Theory},
  pages 139--153. Springer, 2012.

\bibitem[Ben-David et~al.(2010)Ben-David, Blitzer, Crammer, Kulesza, Pereira,
  and Vaughan]{ben2010theory}
Shai Ben-David, John Blitzer, Koby Crammer, Alex Kulesza, Fernando Pereira, and
  Jennifer~Wortman Vaughan.
\newblock A theory of learning from different domains.
\newblock \emph{Machine learning}, 79\penalty0 (1-2):\penalty0 151--175, 2010.

\bibitem[Borji(2019)]{borji2019pros}
Ali Borji.
\newblock Pros and cons of gan evaluation measures.
\newblock \emph{Computer Vision and Image Understanding}, 179:\penalty0 41--65,
  2019.

\bibitem[Brock et~al.(2018)Brock, Donahue, and Simonyan]{brock2018large}
Andrew Brock, Jeff Donahue, and Karen Simonyan.
\newblock Large scale gan training for high fidelity natural image synthesis.
\newblock \emph{arXiv preprint arXiv:1809.11096}, 2018.

\bibitem[Csisz{\'a}r(1964)]{csiszar1964information-theoretic}
Imre Csisz{\'a}r.
\newblock An information-theoretic inequality and its application to proofs of
  the ergodicity of markoff chains.
\newblock \emph{Magyar Tud. Akad. Mat. Kutato Int. Koezl.}, 8:\penalty0
  85--108, 1964.

\bibitem[Davis and Goadrich(2006)]{davis2006relationship}
Jesse Davis and Mark Goadrich.
\newblock The relationship between precision-recall and roc curves.
\newblock In \emph{Proceedings of the 23rd international conference on Machine
  learning}, pages 233--240, 2006.

\bibitem[DeGroot(1962)]{degroot1962uncertainty}
Morris~H DeGroot.
\newblock Uncertainty, information, and sequential experiments.
\newblock \emph{The Annals of Mathematical Statistics}, 33\penalty0
  (2):\penalty0 404--419, 1962.

\bibitem[Djolonga et~al.(2019)Djolonga, Lucic, Cuturi, Bachem, Bousquet, and
  Gelly]{djolonga2019evaluating}
Josip Djolonga, Mario Lucic, Marco Cuturi, Olivier Bachem, Olivier Bousquet,
  and Sylvain Gelly.
\newblock Evaluating generative models using divergence frontiers.
\newblock \emph{arXiv preprint arXiv:1905.10768}, 2019.

\bibitem[Djolonga et~al.(2020)Djolonga, Lucic, Cuturi, Bachem, Bousquet, and
  Gelly]{pmlr-v108-djolonga20a}
Josip Djolonga, Mario Lucic, Marco Cuturi, Olivier Bachem, Olivier Bousquet,
  and Sylvain Gelly.
\newblock Precision-recall curves using information divergence frontiers.
\newblock In \emph{Proceedings of the Twenty Third International Conference on
  Artificial Intelligence and Statistics}, Proceedings of Machine Learning
  Research. PMLR, 2020.

\bibitem[Ganin et~al.(2016)Ganin, Ustinova, Ajakan, Germain, Larochelle,
  Laviolette, Marchand, and Lempitsky]{ganin2016domain}
Yaroslav Ganin, Evgeniya Ustinova, Hana Ajakan, Pascal Germain, Hugo
  Larochelle, Fran{\c{c}}ois Laviolette, Mario Marchand, and Victor Lempitsky.
\newblock Domain-adversarial training of neural networks.
\newblock \emph{The journal of machine learning research}, 17\penalty0
  (1):\penalty0 2096--2030, 2016.

\bibitem[Harremo{\"e}s(2004)]{harremoes2004new}
Peter Harremo{\"e}s.
\newblock A new look on majorization.
\newblock \emph{Proceedings ISITA 2004, Parma, Italy}, pages 1422--1425, 2004.

\bibitem[Heusel et~al.(2017)Heusel, Ramsauer, Unterthiner, Nessler, and
  Hochreiter]{heusel2017gans}
Martin Heusel, Hubert Ramsauer, Thomas Unterthiner, Bernhard Nessler, and Sepp
  Hochreiter.
\newblock Gans trained by a two time-scale update rule converge to a local nash
  equilibrium.
\newblock In \emph{Advances in Neural Information Processing Systems}, pages
  6626--6637, 2017.

\bibitem[Im et~al.(2018)Im, Ma, Taylor, and Branson]{im2018quantitatively}
Daniel~Jiwoong Im, He~Ma, Graham Taylor, and Kristin Branson.
\newblock Quantitatively evaluating gans with divergences proposed for
  training.
\newblock \emph{arXiv preprint arXiv:1803.01045}, 2018.

\bibitem[Johansson et~al.(2019)Johansson, Sontag, and
  Ranganath]{johansson2019support}
Fredrik~D Johansson, David Sontag, and Rajesh Ranganath.
\newblock Support and invertibility in domain-invariant representations.
\newblock In \emph{The 22nd International Conference on Artificial Intelligence
  and Statistics}, pages 527--536. PMLR, 2019.

\bibitem[Karras et~al.(2019)Karras, Laine, and Aila]{karras2019style}
Tero Karras, Samuli Laine, and Timo Aila.
\newblock A style-based generator architecture for generative adversarial
  networks.
\newblock In \emph{Proceedings of the IEEE Conference on Computer Vision and
  Pattern Recognition}, pages 4401--4410, 2019.

\bibitem[Kynk{\"a}{\"a}nniemi et~al.(2019)Kynk{\"a}{\"a}nniemi, Karras, Laine,
  Lehtinen, and Aila]{kynkaanniemi2019improved}
Tuomas Kynk{\"a}{\"a}nniemi, Tero Karras, Samuli Laine, Jaakko Lehtinen, and
  Timo Aila.
\newblock Improved precision and recall metric for assessing generative models.
\newblock In \emph{Advances in Neural Information Processing Systems}, pages
  3929--3938, 2019.

\bibitem[Liese and Vajda(2006)]{liese2006divergences}
Friedrich Liese and Igor Vajda.
\newblock On divergences and informations in statistics and information theory.
\newblock \emph{IEEE Transactions on Information Theory}, 52\penalty0
  (10):\penalty0 4394--4412, 2006.

\bibitem[Lin et~al.(2017)Lin, Khetan, Fanti, and Oh]{lin2017arxiv}
Zinan Lin, Ashish Khetan, Giulia~C. Fanti, and Sewoong Oh.
\newblock Pacgan: The power of two samples in generative adversarial networks.
\newblock \emph{CoRR}, abs/1712.04086, 2017.
\newblock URL \url{http://arxiv.org/abs/1712.04086}.

\bibitem[Lin et~al.(2018)Lin, Khetan, Fanti, and Oh]{lin2018pacgan}
Zinan Lin, Ashish Khetan, Giulia Fanti, and Sewoong Oh.
\newblock Pacgan: The power of two samples in generative adversarial networks.
\newblock In \emph{Advances in Neural Information Processing Systems}, pages
  1498--1507, 2018.

\bibitem[Liu et~al.(2021)Liu, Pillutla, Welleck, Oh, Choi, and
  Harchaoui]{liu2021divergence}
Lang Liu, Krishna Pillutla, Sean Welleck, Sewoong Oh, Yejin Choi, and Zaid
  Harchaoui.
\newblock Divergence frontiers for generative models: Sample complexity,
  quantization effects, and frontier integrals.
\newblock \emph{Advances in Neural Information Processing Systems}, 34, 2021.

\bibitem[Lorenz(1905)]{lorenz1905methods}
Max~O Lorenz.
\newblock Methods of measuring the concentration of wealth.
\newblock \emph{Publications of the American statistical association},
  9\penalty0 (70):\penalty0 209--219, 1905.

\bibitem[M{\"u}ller(1997)]{muller1997integral}
Alfred M{\"u}ller.
\newblock Integral probability metrics and their generating classes of
  functions.
\newblock \emph{Advances in Applied Probability}, 29\penalty0 (2):\penalty0
  429--443, 1997.

\bibitem[Naeem et~al.(2020)Naeem, Oh, Uh, Choi, and Yoo]{naeem2020reliable}
Muhammad~Ferjad Naeem, Seong~Joon Oh, Youngjung Uh, Yunjey Choi, and Jaejun
  Yoo.
\newblock Reliable fidelity and diversity metrics for generative models.
\newblock \emph{arXiv preprint arXiv:2002.09797}, 2020.

\bibitem[Nguyen et~al.(2010)Nguyen, Wainwright, and
  Jordan]{nguyen2010estimating}
XuanLong Nguyen, Martin~J Wainwright, and Michael~I Jordan.
\newblock Estimating divergence functionals and the likelihood ratio by convex
  risk minimization.
\newblock \emph{IEEE Transactions on Information Theory}, 56\penalty0
  (11):\penalty0 5847--5861, 2010.

\bibitem[Nowozin et~al.(2016)Nowozin, Cseke, and Tomioka]{nowozin2016f}
Sebastian Nowozin, Botond Cseke, and Ryota Tomioka.
\newblock f-gan: Training generative neural samplers using variational
  divergence minimization.
\newblock \emph{Advances in neural information processing systems}, 29, 2016.

\bibitem[Piccoli et~al.(2019)Piccoli, Rossi, and
  Tournus]{piccoli2019wasserstein}
Benedetto Piccoli, Francesco Rossi, and Magali Tournus.
\newblock A wasserstein norm for signed measures, with application to nonlocal
  transport equation with source term.
\newblock \emph{arXiv preprint arXiv:1910.05105}, 2019.

\bibitem[P{\'o}czos and Schneider(2011)]{poczos2011estimation}
Barnab{\'a}s P{\'o}czos and Jeff Schneider.
\newblock On the estimation of alpha-divergences.
\newblock In \emph{Proceedings of the Fourteenth International Conference on
  Artificial Intelligence and Statistics}, pages 609--617. JMLR Workshop and
  Conference Proceedings, 2011.

\bibitem[Redko et~al.(2020)Redko, Morvant, Habrard, Sebban, and
  Bennani]{redko2020survey}
Ievgen Redko, Emilie Morvant, Amaury Habrard, Marc Sebban, and Youn{\`e}s
  Bennani.
\newblock A survey on domain adaptation theory.
\newblock \emph{arXiv preprint arXiv:2004.11829}, 2020.

\bibitem[Rubenstein et~al.(2019)Rubenstein, Bousquet, Djolonga, Riquelme, and
  Tolstikhin]{rubenstein2019practical}
Paul Rubenstein, Olivier Bousquet, Josip Djolonga, Carlos Riquelme, and Ilya~O
  Tolstikhin.
\newblock Practical and consistent estimation of f-divergences.
\newblock In \emph{Advances in Neural Information Processing Systems}, pages
  4072--4082, 2019.

\bibitem[Ruderman et~al.(2012)Ruderman, Reid, Garc{\'\i}a-Garc{\'\i}a, and
  Petterson]{ruderman2012tighter}
Avraham Ruderman, Mark Reid, Dar{\'\i}o Garc{\'\i}a-Garc{\'\i}a, and James
  Petterson.
\newblock Tighter variational representations of f-divergences via restriction
  to probability measures.
\newblock \emph{arXiv preprint arXiv:1206.4664}, 2012.

\bibitem[Sajjadi et~al.(2018)Sajjadi, Bachem, Lucic, Bousquet, and
  Gelly]{sajjadi2018}
Mehdi S.~M. Sajjadi, Olivier Bachem, Mario Lucic, Olivier Bousquet, and Sylvain
  Gelly.
\newblock Assessing generative models via precision and recall.
\newblock In S.~Bengio, H.~Wallach, H.~Larochelle, K.~Grauman, N.~Cesa-Bianchi,
  and R.~Garnett, editors, \emph{Advances in Neural Information Processing
  Systems 31}, pages 5234--5243. Curran Associates, Inc., 2018.

\bibitem[Scott et~al.(2013)Scott, Blanchard, and
  Handy]{scott2013classification}
Clayton Scott, Gilles Blanchard, and Gregory Handy.
\newblock Classification with asymmetric label noise: Consistency and maximal
  denoising.
\newblock In \emph{Conference on learning theory}, pages 489--511. PMLR, 2013.

\bibitem[Simon et~al.(2019)Simon, Webster, and Rabin]{simon2019revisiting}
Loic Simon, Ryan Webster, and Julien Rabin.
\newblock Revisiting precision recall definition for generative modeling.
\newblock In Kamalika Chaudhuri and Ruslan Salakhutdinov, editors,
  \emph{Proceedings of the 36th International Conference on Machine Learning},
  volume~97 of \emph{Proceedings of Machine Learning Research}, pages
  5799--5808, Long Beach, California, USA, 09--15 Jun 2019. PMLR.

\bibitem[Siry et~al.(2021)Siry, H{\'e}madou, Simon, and
  Jurie]{siry2021inductive}
Rodrigue Siry, Louis H{\'e}madou, Lo{\"\i}c Simon, and Fr{\'e}d{\'e}ric Jurie.
\newblock On the inductive biases of deep domain adaptation.
\newblock \emph{arXiv preprint arXiv:2109.07920}, 2021.

\bibitem[Sriperumbudur et~al.(2009)Sriperumbudur, Fukumizu, Gretton,
  Sch{\"o}lkopf, and Lanckriet]{sriperumbudur2009integral}
Bharath~K Sriperumbudur, Kenji Fukumizu, Arthur Gretton, Bernhard
  Sch{\"o}lkopf, and Gert~RG Lanckriet.
\newblock On integral probability metrics,$\backslash$phi-divergences and
  binary classification.
\newblock \emph{arXiv preprint arXiv:0901.2698}, 2009.

\bibitem[Steinwart(2007)]{steinwart2007compare}
Ingo Steinwart.
\newblock How to compare different loss functions and their risks.
\newblock \emph{Constructive Approximation}, 26\penalty0 (2):\penalty0
  225--287, 2007.

\bibitem[Terj{\'e}k(2021)]{terjek2021moreau}
D{\'a}vid Terj{\'e}k.
\newblock Moreau-yosida $ f $-divergences.
\newblock In \emph{International Conference on Machine Learning}, pages
  10214--10224. PMLR, 2021.

\bibitem[Tewari and Bartlett(2007)]{tewari2007consistency}
Ambuj Tewari and Peter~L Bartlett.
\newblock On the consistency of multiclass classification methods.
\newblock \emph{Journal of Machine Learning Research}, 8\penalty0 (5), 2007.

\bibitem[Tsallis(1988)]{tsallis1988possible}
Constantino Tsallis.
\newblock Possible generalization of boltzmann-gibbs statistics.
\newblock \emph{Journal of statistical physics}, 52\penalty0 (1):\penalty0
  479--487, 1988.

\bibitem[van Erven and Harremo{\"e}s(2010)]{van2010renyi}
Tim van Erven and Peter Harremo{\"e}s.
\newblock R{\'e}nyi divergence and majorization.
\newblock In \emph{2010 IEEE International Symposium on Information Theory},
  pages 1335--1339. IEEE, 2010.

\bibitem[Wang et~al.(2019)Wang, Wang, Zhang, Owens, and Efros]{wang2019cnn}
Sheng-Yu Wang, Oliver Wang, Richard Zhang, Andrew Owens, and Alexei~A Efros.
\newblock Cnn-generated images are surprisingly easy to spot... for now.
\newblock \emph{arXiv preprint arXiv:1912.11035}, 2019.

\bibitem[Webster et~al.(2019)Webster, Rabin, Simon, and
  Jurie]{webster2019GenOverfit}
Ryan Webster, Julien Rabin, Loic Simon, and Fr\'ed\'eric Jurie.
\newblock Detecting overfitting of deep generative networks via latent
  recovery.
\newblock In \emph{Proceedings of the IEEE Conference on Computer Vision and
  Pattern Recognition (CVPR)}, 2019.

\bibitem[Zhang et~al.(2016)Zhang, Bengio, Hardt, Recht, and
  Vinyals]{zhang2016understanding}
Chiyuan Zhang, Samy Bengio, Moritz Hardt, Benjamin Recht, and Oriol Vinyals.
\newblock Understanding deep learning requires rethinking generalization.
\newblock \emph{arXiv preprint arXiv:1611.03530}, 2016.

\bibitem[Zhang et~al.(2019)Zhang, Liu, Long, and Jordan]{zhang2019bridging}
Yuchen Zhang, Tianle Liu, Mingsheng Long, and Michael Jordan.
\newblock Bridging theory and algorithm for domain adaptation.
\newblock In \emph{International Conference on Machine Learning}, pages
  7404--7413, 2019.

\bibitem[Zhao et~al.(2019)Zhao, Des~Combes, Zhang, and
  Gordon]{zhao2019learning}
Han Zhao, Remi~Tachet Des~Combes, Kun Zhang, and Geoffrey Gordon.
\newblock On learning invariant representations for domain adaptation.
\newblock In \emph{International Conference on Machine Learning}, pages
  7523--7532. PMLR, 2019.

\end{thebibliography}

\end{document}